\documentclass[a4paper,fleqn]{cas-sc}

\usepackage[authoryear,longnamesfirst]{natbib}

\def\tsc#1{\csdef{#1}{\textsc{\lowercase{#1}}\xspace}}
\tsc{WGM}
\tsc{QE}

\newtheorem{theorem}{Theorem}
\newtheorem{lemma}[theorem]{Lemma}
\newtheorem{proposition}[theorem]{Proposition}
\newdefinition{remark}{Remark}
\newdefinition{definition}{Definition}
\newdefinition{example}{Example}
\newdefinition{problem}{Problem}
\newproof{proof}{Proof}
\newproof{pot}{Proof of Theorem \ref{thm}}

\usepackage{graphicx}%
\usepackage{multirow}%
\usepackage{amsmath,amssymb,amsfonts}%
\usepackage{mathrsfs}%
\usepackage{dsfont}%
\usepackage{subcaption} 
\usepackage{enumitem}  
\usepackage{booktabs} 
\usepackage{tabularx} 
\usepackage{array}    

\newcommand{\R}{\mathbb R}
\newcommand{\N}{\mathbb N}
\newcommand{\Z}{\mathbb Z}

\newcommand{\ep}{\varepsilon}

\newcommand{\LL}{\mathcal{L}}

\newcommand{\FF}{\widehat{F}}
\newcommand{\GG}{\widehat{G}}
\newcommand{\PP}{\tilde{P}}

\newcommand{\dotProduct}[2]{\langle #1, #2 \rangle}
\newcommand{\DD}{\mathscr{D}}

\DeclareMathOperator{\ind}{ind}

\DeclareMathOperator{\supp}{supp}

\DeclareMathOperator{\id}{id}
\DeclareMathOperator{\dom}{dom}
\DeclareMathOperator{\img}{img}

\DeclareMathOperator{\co}{co}

\begin{document}
\let\WriteBookmarks\relax
\def\floatpagepagefraction{1}
\def\textpagefraction{.001}

\shorttitle{Knot-Preserving Directional Mollification}

\shortauthors{A. González-Calvin, J. F. Jiménez, and H. G. de Marina}

\title [mode = title]{Directional Mollification for Knot-Preserving
\texorpdfstring{$C^{\infty}$}{C-infinity} Smoothing of Polygonal Chains with
Explicit Curvature Bounds}


%

\author[1]{Alfredo González-Calvin}[
orcid=0009-0004-2760-7096]

\cormark[1]


\ead{alfredgo@ucm.es}



\affiliation[1]{organization={Dept. of Computer Architecture and Automation, Faculty
            of Physics, Complutense University of Madrid},
            addressline={Plaza de las Ciencias, 1},
            city={Madrid},
            postcode={28040},
            state={Madrid},
            country={Spain}}

\author[2]{Juan F. Jiménez}[orcid=0000-0002-1448-2296]


\ead{juan.jimenez@fis.ucm.es}



\affiliation[2]{organization={Dept. of Computer Architecture and Automation, Faculty
            of Physics, Complutense University of Madrid},
            addressline={Plaza de las Ciencias, 1},
            city={Madrid},
            postcode={28040},
            state={Madrid},
            country={Spain}}

\author[3]{Héctor García de Marina}[orcid=0000-0002-3341-6745]


\ead{hgdemarina@ugr.es}



\affiliation[3]{organization={Dept. of Computer Engineering, Automation, and
            Robotics\& Institute of Mathematics, University of Granada},
            addressline={Calle Ventanilla},
            city={Granada},
            postcode={18001},
            state={Granada},
            country={Spain}}

\cortext[1]{Corresponding author}



\begin{abstract}                          
We introduce the \textit{directional mollification} operator, which acts on
polygonal chains to produce $C^{\infty}$ curve approximants that are arbitrarily
close to the original curve---pointwise and uniformly on compact subsets---while strictly interpolating its vertices. Unlike
standard mollification, which fails to preserve vertices, this directional
construction enables local, vertex-preserving smoothing; modifying a single
segment alters the $C^{\infty}$ output only within an explicitly controllable
neighborhood. The operator admits closed-form curvature bounds and provides
analytic control over curve geometry. Furthermore, we develop a parametric
family of smoothing operators that unifies conventional and directional
mollification into a single framework. Combining computational simplicity with
strict waypoint fidelity, this method is directly applicable to geometric
modeling, robotics, computer graphics, and CNC machining.
\end{abstract}


\begin{highlights}
    \item Directional mollification enables the easy generation, with formal
    convergence, curvature and locality guarantees, of $C^{\infty}$
    curves that preserve every vertex of a polygonal chain.
    \item Closed-form computationally lightweight curvature bounds for
    directionally mollified polygonal chains are derived, enabling direct feasibility
    checks for geometric design and motion-control applications.
    \item Formal computations show that the directional mollification of a
    polygonal chain and itself coincide in several easy-to-compute sets centered
    in the midpoint of the segments.
    \item A single-parameter family of smoothing operators is introduced,
    encompassing both mollification and the proposed directional variant as
    special cases.
    \item Qualitative visual comparisons with classical spline schemes are
    presented to illustrate the distinctive geometric behavior of the proposed method.
\end{highlights}

\begin{keywords}
    Directional mollification \sep Polygonal chains \sep Knot-preserving smoothing \sep $C^{\infty}$ curve approximation \sep Curvature bounds
\end{keywords}

\maketitle

\section{Introduction}

Given an ordered set of vertices (also called knots), it is a
typical desired task to construct a $C^r$ curve, with $r > 1$ such that (i)
interpolates the given vertices exactly, (ii) remains arbitrarily close (in an
appropriate sense of convergence) to the original piecewise-linear polygonal
curve defined by the points, (iii) admits closed-form and efficiently computable
curvature bounds, and (iv) exhibits strong locality, such that modification of a single segment affects only a small neighborhood of the curve. These
requirements arise, for example, in geometric modeling, high-precision curve
fitting, trajectory generation in industrial and mobile robotics, CNC machining
and computer graphics.

Classical approaches based on spline constructions and subdivision schemes
address many of these properties but involve trade-offs. For instance, spline
and Bézier representations provide compact parametric descriptions and local
control \cite{piegl2012nurbs, farin2002curves, cohen2001geometric}. However,
enforcing exact interpolation while maintaining high smoothness often requires
knot insertion or degree elevation \cite{schumaker2007spline, hasan2024b,
numericalstabilitybezierBspline}. Subdivision schemes, such as the Refine-Smooth
algorithm \cite{HAMEED2017289} or
corner cutting \cite{TAN2015819, DEBOOR1987125, hormann2024curvature, DYN2022102083},
show that, under certain constraints, the designed curve can be
continuously differentiable, even curvature continuous, or obtain
$C^{5}$ continuity, but lack vertex interpolation.
Gaussian process regression
\cite{rasmussen2006} and kernel smoothing \cite{hastie2009} offer $C^{\infty}$
smoothness, but sacrifice exact knot interpolation and introduce $O(n^3)$
computational overhead alongside potential numerical instability.
Optimization-based formulations \cite{mellinger2011,flory2008constrained,bentamy2005cross} embed
curvature constraints within convex programs, supporting real-time execution,
yet their performance is sensitive to parameter selection and requires careful
tuning \cite{heiden2018grips, ohrhallinger20212d}. There have also been efforts in
\cite{CHEN2022103408,LinearPathsIndustrialRobots, Tajima_2021ijat} to create smooth
vertex-preserving curves from polygonal chains, but they usually require
high-degree elevation or cumbersome computations. For example, B-splines are used in
\cite{hua2023global} to control the approximation error and the curvature of the
smoothed curve in CNC machining, thus generating knot-preserving feasible
curves. Nevertheless, it comes at a cost of degree elevation and $C^r$
continuity with $r < \infty$. Recently, Pythagorean-hodograph (PH)
curves have been shown to provide exact arc-length parametrization,
closed-form curvature expressions, and efficient construction of
rotation-minimizing frames \cite{farouki2008pythagorean,
farouki2017comprehensive}. PH quintic splines support local modification and
Hermite interpolation while preserving key geometric invariants
\cite{BAY2023128240,FAROUKI2026102505}. However, PH constructions typically rely on algebraic
constraints on hodographs and fixed polynomial degree, which can complicate
flexibility and extension to arbitrary data distributions.

The technique of mollification---convolution of a discontinuous or nonsmooth
function with a compactly supported $C^{\infty}$ kernel
\cite{Evans2022-PDE}---offers a conceptually and computationally simple,
analytically rigorous method to obtain $C^{\infty}$ smoothness, without the
overhead of degree elevation of polynomial splines. Applied to polygonal chain
curves, conventional mollification has been recently introduced in
\cite{gonzalezcalvin2025efficientgenerationsmoothpaths}. This technique
produces, from polygonal chains and locally-integrable functions, $C^{\infty}$
(locally, i.e., at vertices, and globally) approximants with closed-form
equations, as well as upper bounds on curvature, while remaining computationally
efficient and numerically stable. Nevertheless, a fundamental geometric
limitation of conventional mollifiers is their confinement of the image of the
mollified curve to the convex hull of the original data. This implies that
the locations of the vertices of a polygonal chain cannot, in general, be preserved by
mollification alone. For many geometric-design tasks, as well as robotics and CNC
specifications---where exact vertices preservation is an explicit
specification---this convex-hull restriction is unacceptable.

In this work, we remove the previous restriction by introducing a new term in the
mollification operator that, for polygonal chains, produces a vertex-invariant
curve with the same analytical tractability, convergence, smoothness and numerical efficiency of conventional mollification. The proposed directional
mollifier improves the conventional mollification with a convolution-weighted
mean of directional derivatives of the original curve. This construction avoids
the convex-hull restriction of conventional mollification, and, by a suitable
choice of a parameter $\ep > 0$, the operator yields $C^{\infty}$ curve approximants that:
(i) pass exactly through every original knot of a polygonal chain, (ii)
converge---letting $\ep$ to $0$---pointwise and uniformly in compact subsets to any
almost everywhere differentiable function, (iii) admit explicit closed-form
curvature bounds, and (iv) display strong locality: changing a single segment
perturbs the smoothed curve only on that segment and in a controllably small
neighborhood of its endpoints. As an example, see Figure \ref{fig:Figure_1_Example},
where the original polygonal chain is shown in black, and the conventional and
directional mollifications in green and red, respectively,
for a value of the aforementioned parameter $\ep=0.4$. As can be seen, the conventional mollification is restricted to the convex hull (of the image of)
the original function, and does not intersect its vertices. Nevertheless, the
directional mollification presented in this work intersects its vertices, and as we will show preserves the
smoothness and convergence properties of conventional mollifiers. Finally, as
our last contribution, we introduce a weight on the directional derivative term
yielding a parametric family of $C^{\infty}$ functions sharing the same
convergence properties, of which both conventional mollification and directional mollification are special instances. The resulting theory provides simple,
closed-form tools for generating $C^{\infty}$ knot-preserving functions,
analytical and rigorous curvature guarantees, and local interaction, all of them
properties of immediate interest in geometric modeling, curve design, mobile and
industrial robotics and CNC machining. Therefore, it complements both subdivision schemes and novel PH curve
methodologies by trading the exact polynomial structure and arc-length
parametrization for greater flexibility, analytic smoothness, and a unifying operator-based formulation suitable for a broader class of geometric processing
tasks.

This work is organized as follows. Section \ref{sec:Notation} sets notation and
preliminaries. Section \ref{sec:Mollifiers} reviews conventional mollifiers, presenting key results, and recalls several properties of (right) directional derivatives. Section
\ref{sec:NewMollification} defines the directional mollification, proving
smoothness and convergence properties and discussing several differences with
respect to the conventional mollification for curve smoothing. Section
\ref{sec:ApplicationThreePointTwoSegments} applies the theory to polygonal
chains, proving vertex-preservation, curvature bounds, and highlighting locality
(single-segment edits affect only a small neighborhood) and practical parameter
choices. Section \ref{sec:Combination} presents the parametric family unifying
conventional and directional mollifications. Section \ref{sec:Conclusions}
summarizes the contributions and concludes this work.

\section{Notation}\label{sec:Notation}

We define as $\N=\{1,2,\dots,\}$ the set of positive integers, and as $\N_0 = \N \cup
\{0\}$. All integrals must be thought of with respect to the Lebesgue measure and the Borel sigma
algebra. For a function $f \in L^p(X):=L^p(X,\R)$ with $p \in [1,\infty]$, we
denote the $L^p$ norm as $\|f\|_p$ while for a vector $x\in \R^n$ its $\ell_p$
norm as $\|x\|_p$. It will be clear from the context their difference. We say
that  $f : X \to \R$ is locally $p$-integrable, denoted $f \in
L^p_{\mathrm{loc}}(X,\R)$, if $f$ is $p$-integrable on every compact subset of
$X$.  We denote by $\id : X \to X$ the identity function defined as $\id(x) =
x$.  The indicator (or characteristic) function of a set $A$ is denoted
as $\ind_{A}$.
If $f : X \to Y$ and $A \subset X$, then we denote the restriction of $f$ to $A$
as $f|_{A}$. For any two sets $X,Y$ we denote $C(X,Y)$ as the set of continuous
functions from $X$ to $Y$ and for $n \in \N \cup \{\infty\}$ we denote as
$C^n(X,Y)$ the set of $n$-times continuously differentiable functions from $X$
to $Y$. If $Y = \R$, then we will usually note $ C^n(X):=C^n(X,\R)$.  For a set $A$
we denote its closure as $\overline{A}$, and its convex hull as $\co(A)$. The
support of a real-valued function is defined as $\supp f := \overline{\{x \in
\dom f \mid f(x) \neq 0\}}$, where $\dom f$ is the domain of $f$, and
we denote as $\img(f)$ the image of a function. If $X \subset
\R^n$ is open, we also denote the set $\DD(X)$ as the set of functions that
belong to $C^{\infty}(X)$ and have compact support in $X$. If $f : \R^n \to \R$
and $g : \R^n \to \R$ are two functions, we denote their convolution, whenever it
exists, as $f * g$. A curve in $\R^n$ is a measurable function $f : X
\subset \R \to \R^n$. Writing $f = (f_1,\dots,f_n)$, we call $f_i$ the $i$'th
component of the curve for $i \in \{1,\dots,n\}$, and denote its derivative as
$Df_i$ when it exists. If $\varphi : \R \to \R$, we denote by $f * \varphi := (f_1 *
\varphi, \dots, f_n * \varphi)$ their convolution component-wise whenever it exists.
We say $f \in L^1_{\mathrm{loc}}(X,\R^n)$ if each of its
components is an $L^1_{\mathrm{loc}}(X,\R)$ function.
Moreover, if a function is defined
as $f:\R^n \to \R$, $x = (x_1,\dots,x_n) \mapsto f(x)$ then $\partial_{x_i} f$
or $\partial_{i}f$ is the partial derivative of $f$ with respect to the $i$th
variable. If $\alpha = (\alpha_1,\dots,\alpha_n) \in \N_0^n$ is a multi-index, we
denote $\partial_x^{\alpha}f =
\partial_{x_1}^{\alpha_1}\dots\partial_{x_n}^{\alpha_n}f=
\partial_{1}^{\alpha_1}\dots\partial_{n}^{\alpha_n}f$. We denote by
$\dotProduct{\cdot}{\cdot} : \R^n \times \R^n \to \R$ the standard inner product
in $\R^n$, and by $B(0,1) = \{x \in \R^n \mid \|x\|_2 < 1\}$ and $\overline{B}(0,1) =
\{x \in \R^n \mid \|x\|_2 \leq 1\}$ the unit balls in $\R^n$ with the Euclidean
norm. Given a continuous function $f \in C([a,b],\R^n)$ we define its length
as $\mathcal{L}(f)$ as the supremum
of $\sum_{P}\|f(t_i)-f(t_{i-1})\|$
taken over all the partitions $P$
of $[a,b]$, and $\|\cdot\|$ is any norm in $\R^n$.
Finally, given a  set of ordered points, called knots or vertices,
$(P_i)_{i=0}^{p} \subset \R^n\times \cdots \times \R^n$ with $p \in \N$, we say
that $f : [0,p] \to \R^n$ is a continuous $p+1$ knots/vertices polygonal chain
curve if $f$ consists of the line segments joining the consecutive vertices.

\section{Mollifiers, convex functionals and generalized directional derivatives}
\label{sec:Mollifiers}

\subsection{Mollifiers}
The regularization properties of mollifiers are deeply introduced in
\cite{gonzalezcalvin2025efficientgenerationsmoothpaths}. Here we present their
definitions, summarize several properties, and provide an example.
\begin{definition}[Mollifier]\label{def:MollifiersDefinition}
    Let $\varphi \in \DD(\R^n)$ and for $\ep >0$ define
    $\varphi_{\ep} := \frac{1}{\ep^n}\varphi \circ
    \frac{\id}{\ep}$. We call $\varphi$ a mollifier if it satisfies:
    \begin{enumerate}
        \item $\int_{\R^n}\varphi = 1$, and
        \item in the distributional sense $\lim_{\ep\to 0} \varphi_{\ep} = \delta$,
        where $\delta$ is the Dirac delta distribution.
    \end{enumerate}
\end{definition}

Let us present one of the most popular mollifiers, since it will be used
extensively in this paper.
\begin{example}\label{example:OurMollifier}
    Let $\varphi : \R \to [0,\infty)$ be the function
    \begin{equation}\label{eq:OurMollifier}
        \varphi(x) = c_1 \exp\left(\frac{-1}{1-x^2}\right)\ind_{(-1,1)}(x),
    \end{equation}
    where $c_1 > 0$ is a normalization constant that ensures $\int_{\R}\varphi
    = 1$. Clearly $\supp \varphi = \overline{(-1,1)} =
    [-1,1]$ and $\supp \varphi_{\ep} = [-\ep,\ep]$. Moreover, with a change of
    variables it can be seen that $\int_{\R}\varphi_{\ep} =
    1$, and it can also be shown using the Lebesgue Dominated Convergence
    Theorem that as $\ep \to 0^+$ the second property in
    Definition \ref{def:MollifiersDefinition} holds.
\end{example}

\begin{figure}[pos=t]
    \centering
    \includegraphics[width=0.5\linewidth]{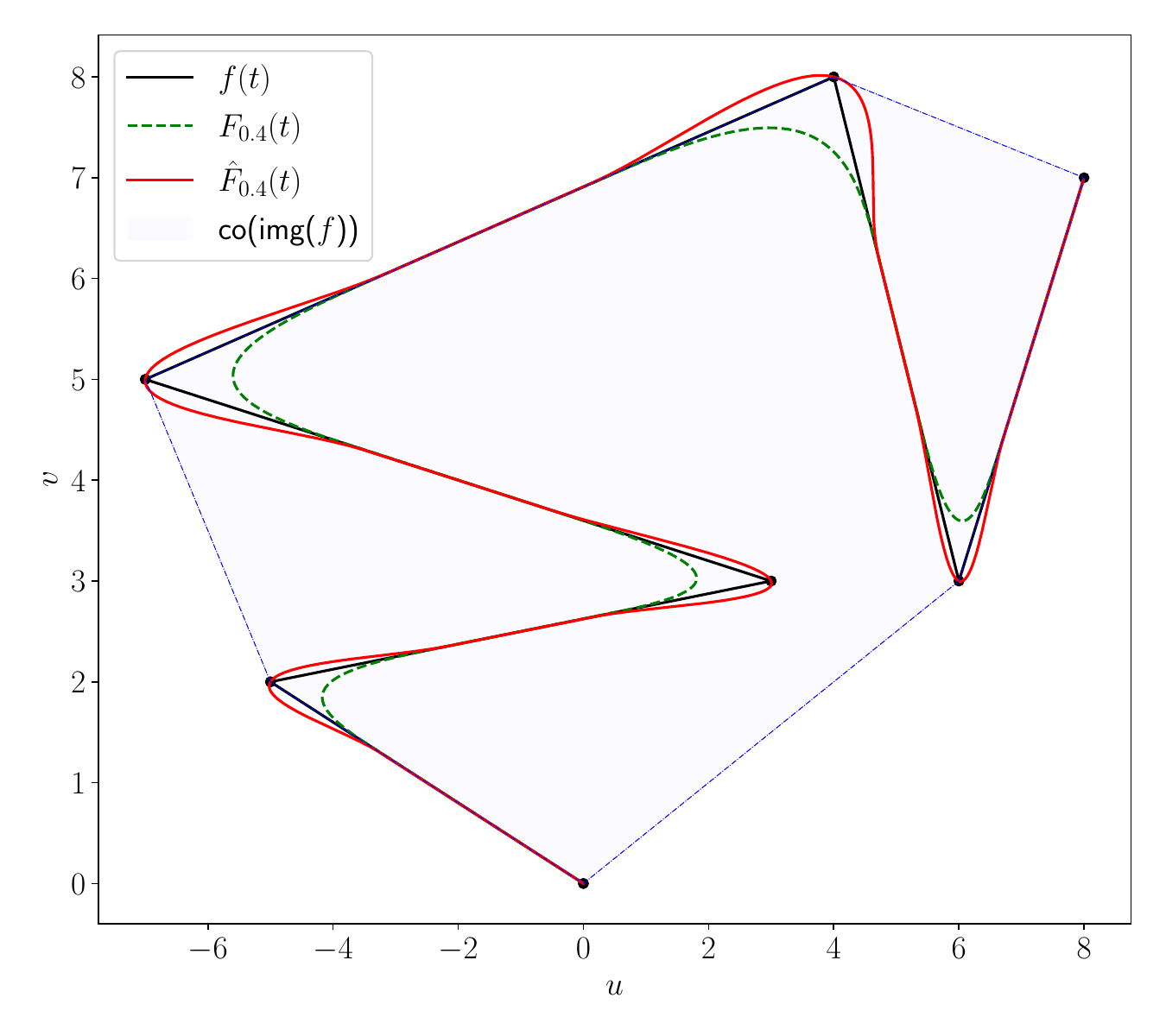}
    \caption{Example of mollification of a polygonal chain $f : \R \to \R^2$,
    where the coordinates in $\R^2$ are expressed as $(u,v)$.
    The green curve shows the (conventionally) mollified function $F_{0.4}$  as
    in Theorem \ref{thm:OurReferenceTheorem} with the mollifier presented in Example
    \ref{example:OurMollifier}. The number $0.4$, stands
    for a parameter $\ep=0.4$ in which the mollification depends. The original (black) curve is an illustrative case of a polygonal chain curve whose knots/vertices are
    shown as black dots. As  can be seen, the mollified (green) curve is
    contained in the convex hull (blue filled polygon) of the image of the
    original curve as Theorem \ref{thm:OurReferenceTheorem} states, but it does
    not intersect the vertices. This is a trade-off between smoothness, convergence,
    and the approximation provided by the conventional mollification with
    \eqref{eq:OurMollifier}. Nevertheless, it will be shown in Section
    \ref{sec:ApplicationThreePointTwoSegments} that for this kind of curve, our directional mollification method will generate a smooth curve that inherits
    the convergence and smoothness properties of the conventional mollification
    while intersecting \emph{all} the knots/vertices that define the
    original polygonal chain curve, while obtaining explicit curvature estimates. This is
    illustrated by the red plot, with the directional mollification $\FF_{0.4}$
    introduced and studied in this paper.}
    \label{fig:Figure_1_Example}
\end{figure}
\begin{theorem}[{\cite[Appendix C, Theorem
    7]{Evans2022-PDE}}]\label{thm:PropertiesOfMollifying}
    Let $f \in L_{\mathrm{loc}}^p(\R^n,\R)$ with $p \in [1,\infty]$, let $\varphi \in
    \DD(\R^n)$ be a mollifier, and let $\ep > 0$.
    The following three statements hold:
    \begin{enumerate}
        \item $f*\varphi_{\ep} \in C^{\infty}(\R^n)$ and for any $n \in \N$
            it holds $\partial^{\alpha}(\varphi_{\ep} * f) =
            (\partial^{\alpha}\varphi_{\ep}) * f$ for all multi index $\alpha
            \in \N_0^n$.
        \item $\varphi_{\ep} * f \to f$ pointwise almost everywhere as $\ep \to
            0^+$.
        \item If $f$ is continuous then $f * \varphi_{\ep}\to f$ as $\ep \to
            0^+$ on compact subsets of $\R^n$.
    \end{enumerate}
\end{theorem}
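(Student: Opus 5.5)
The plan is to treat the three items separately. In every case the starting point is the convolution formula
\[
(\varphi_{\ep} * f)(x)=\int_{\R^n}\varphi_{\ep}(x-y)f(y)\,dy .
\]
The facts we lean on are that $\supp\varphi_{\ep}\subset \overline{B}(0,R\ep)$ for some $R$ (compact support), that $\int\varphi_{\ep}=1$, and that $f\in L^1_{\mathrm{loc}}$. The last holds because $L^p_{\mathrm{loc}}\subset L^1_{\mathrm{loc}}$ by H\"older on compact sets.

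\textbf{Item 1.} Fix $x$ and restrict attention to $h$ with $|h|\le 1$. The difference quotient is
\[
\frac{(\varphi_{\ep}*f)(x+he_i)-(\varphi_{\ep}*f)(x)}{h}=\int \frac{\varphi_{\ep}(x+he_i-y)-\varphi_{\ep}(x-y)}{h}\,f(y)\,dy .
\]
The integrand vanishes outside the compact set $K=\overline{B}(x,R\ep+1)$. By the mean value theorem it is bounded by $\|\partial_i\varphi_{\ep}\|_\infty|f(y)|\ind_K(y)$, which is integrable. Dominated convergence then gives $\partial_i(\varphi_{\ep}*f)=(\partial_i\varphi_{\ep})*f$. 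Since $\partial_i\varphi_{\ep}\in\DD(\R^n)$, we can iterate to obtain the formula for every multi-index $\alpha$. Continuity of $(\partial^\alpha\varphi_{\ep})*f$ follows by the same dominated convergence argument, so $\varphi_{\ep}*f\in C^\infty$.

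\textbf{Item 2.} This is the main step. By the Lebesgue differentiation theorem, almost every $x$ is a Lebesgue point:
\[
\frac{1}{|B(x,r)|}\int_{B(x,r)}|f(y)-f(x)|\,dy\to 0 .
\]
At such a point we use $\int\varphi_{\ep}=1$ and the bound $|\varphi_{\ep}|\le \|\varphi\|_\infty\ep^{-n}$ to estimate
\[
|(\varphi_{\ep}*f)(x)-f(x)|\le \|\varphi\|_\infty\,\ep^{-n}\int_{B(x,R\ep)}|f(y)-f(x)|\,dy .
\]
The right-hand side equals a constant $C(n,R)$ times the Lebesgue-point average with $r=R\ep$, so it tends to $0$. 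The only nontrivial ingredient is the Lebesgue differentiation theorem, which we take as standard.

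\textbf{Item 3.} Let $K$ be compact and let $K'=K+\overline{B}(0,R)$, which is compact. Then $f$ is uniformly continuous on $K'$ with some modulus $\omega$. For $x\in K$ and $\ep\le 1$,
\[
|(\varphi_{\ep}*f)(x)-f(x)|\le\int|\varphi_{\ep}(x-y)|\,|f(y)-f(x)|\,dy\le \|\varphi\|_1\,\omega(R\ep)\to 0 ,
\]
and the bound is uniform in $x\in K$.
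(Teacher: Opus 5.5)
Your proof is correct and follows the standard argument the paper relies on; the paper gives no proof of its own and only cites Evans, Appendix C, Theorem 7. That argument has the same three steps as yours: difference quotients under the integral for smoothness, Lebesgue points for a.e.\ convergence, and uniform continuity on a compact enlargement of $K$ for locally uniform convergence. Your bounds via $\|\varphi\|_\infty$, $\|\varphi\|_1$ and a general support radius $R$, in place of Evans's nonnegative standard mollifier on the unit ball, correctly adapt that proof to the paper's more general definition of a mollifier.
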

And we sum up the results presented in
\cite{gonzalezcalvin2025efficientgenerationsmoothpaths} using a single
theorem.
\begin{theorem}[\cite{gonzalezcalvin2025efficientgenerationsmoothpaths}]
\label{thm:OurReferenceTheorem}
    Let $f \in L_{\mathrm{loc}}^{p}(\R^n,\R), \varphi \in \DD(\R^n, \R)$
    be a mollifier, and define $F_{\ep} := f * \varphi_{\ep}$.
    \begin{enumerate}
        \item If $f$ is convex (resp. concave)  and $\varphi \geq 0$ then
        $F_{\ep} \geq f$ and $F_{\ep}$ is convex (resp.  $F_{\ep} \leq f$
        and $F_{\ep}$ is concave) for any $\ep > 0$. If $f$ is quasiconvex
        and $\varphi \geq 0$ then $F_{\ep}$ is also quasiconvex, for all $\ep >
        0$.
        \item If $f$ is locally convex (resp. locally concave) on an open
        convex set $U \subset \R^n$ and $\varphi \geq 0$, then for any open
        convex set $V
        \subset U$ there exists a $\delta > 0$ such that for all $\epsilon
        \in (0,\delta)$ it holds that $F_{\ep}|_{V} \geq f|_{V}$ and
        $F_{\ep}|_{V}$ is convex (resp. $F_{\ep}|_{V} \leq f|_{V}$ and
        $F_{\ep}|_{V}$ is concave).
        \item If $f(x) = \dotProduct{a}{x} + c$ with $c \in \R$,
        $a \in \R^n$ and $\varphi$ is an even function whose support
        is $\supp \varphi = \overline{B}(0,1)$ then it holds that
        $F_{\ep} = f$ for all $\ep  > 0 $.
        \item If $n = 1$ and $f$ is monotone, then
        $F_{\ep}$ is monotone for all $\ep > 0$.
    \end{enumerate}
    Moreover, if $f \in C([a,b],\R^n)$ with $-\infty < a < b < \infty$ and
    $\varphi \in \DD(\R)$ is a non negative mollifier whose support is $[-1,1]$,
    for $F_{\ep} := (f_i * \varphi_{\ep})_{i=1}^{n}$ it holds that
    \begin{enumerate}
        \item $\mathcal{L}(F_{\ep}) \leq \LL(f)$ for all $\ep > 0$, where $\LL
        : C(\R,\R^n) \to \R$
        is the length of the curve $f$, which has been extended as done
        in \cite[Lemma 7 and Theorem
        8]{gonzalezcalvin2025efficientgenerationsmoothpaths}, computed
        with respect to any norm in $\R^n$.
        \item $F_{\ep}([a,b]) \subset \co(f([a,b]))$, for all $\ep > 0$.
    \end{enumerate}
\end{theorem}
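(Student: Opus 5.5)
The plan is to prove the items of Theorem \ref{thm:OurReferenceTheorem} one at a time. Each argument rests on the representation $F_{\ep}(x)=\int f(x-y)\varphi_{\ep}(y)\,dy$. When $\varphi\ge 0$ and $\int\varphi_{\ep}=1$, this writes $F_{\ep}(x)$ as an average of translates $f(\cdot-y)$ under the probability measure $\varphi_{\ep}(y)\,dy$. Smoothness of $F_{\ep}$ is already supplied by Theorem \ref{thm:PropertiesOfMollifying}.

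\textbf{Convexity and quasiconvexity.} Fix $x$. Each translate $x\mapsto f(x-y)$ is convex, and a nonnegative average of convex functions is convex, so $F_{\ep}$ is convex. For $F_{\ep}\ge f$, apply Jensen's inequality to the probability measure $\varphi_{\ep}(y)\,dy$:
\[
F_{\ep}(x)=\int f(x-y)\varphi_{\ep}(y)\,dy\ \ge\ f\Big(x-\int y\,\varphi_{\ep}(y)\,dy\Big).
\]
This only yields $f(x)$ when the barycenter $\int y\varphi_{\ep}$ vanishes. In general I would use instead a subgradient $g$ of $f$ at $x$, which gives $F_{\ep}(x)\ge f(x)-\dotProduct{g}{\int y\varphi_{\ep}}$. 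I therefore expect to need $\varphi$ even (as in Example \ref{example:OurMollifier}), or to note that the claim implicitly assumes it. The concave case follows by applying this to $-f$. Quasiconvexity is the step I expect to be the main obstacle, since averages of quasiconvex functions need not be quasiconvex. I would first handle $n=1$, where quasiconvex means unimodal, and argue by monotonicity on half-lines. If no general argument works, I would fall back on the reference's hypotheses.

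\textbf{Local convexity, affine invariance and monotonicity.} For the local statement, take $V\subset U$ with $\overline V\subset U$ compact and choose $\delta$ smaller than $\mathrm{dist}(V,\partial U)$. Then for $x\in V$ and $\|y\|\le\ep<\delta$ every translate stays inside $U$, and the global argument applies verbatim on $V$. For affine $f$, linearity gives
\[
F_{\ep}(x)=\dotProduct{a}{x}+c-\Big\langle a,\int y\varphi_{\ep}(y)\,dy\Big\rangle,
\]
and the last term vanishes because $\varphi$ is even. For $n=1$ and $f$ nondecreasing, $x\le x'$ gives $f(x-y)\le f(x'-y)$ for every $y$, and integrating against $\varphi_{\ep}\ge 0$ preserves the inequality.

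\textbf{Curve statements.} Extend $f$ constantly outside $[a,b]$, as in the cited lemma. This extension preserves the length and keeps the image unchanged. For any partition $t_0<\dots<t_k$, the triangle inequality for the Bochner integral gives
\[
\sum_i\|F_{\ep}(t_i)-F_{\ep}(t_{i-1})\|\le\int\sum_i\|f(t_i-y)-f(t_{i-1}-y)\|\,\varphi_{\ep}(y)\,dy\le\LL(f).
\]
Taking the supremum over partitions yields $\LL(F_{\ep})\le\LL(f)$. Finally, each value $F_{\ep}(t)$ is a probability average of points of $f([a,b])$. Because $f([a,b])$ is compact, its convex hull is closed, so by the separating hyperplane theorem the average lies in $\co(f([a,b]))$.
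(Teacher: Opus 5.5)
The paper gives no proof of this theorem, since it is imported from the authors' earlier work, so I judge your proposal on its own.

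\textbf{What is sound, and the hypotheses you add.} Your arguments for convexity of $F_\ep$, the affine case, monotonicity, the length bound and the convex-hull inclusion are correct. Your suspicion about $F_\ep\ge f$ is also correct, and it points to a defect in the statement rather than in your proof. The first moment $m:=\int y\varphi(y)\,dy$ must vanish: $f(x)=\dotProduct{a}{x}$ is both convex and concave, so item 1 would force $F_\ep=f$, whereas $F_\ep=f-\ep\dotProduct{a}{m}$.

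You also add two hypotheses silently, and both are necessary rather than merely convenient. The first is $\overline V\subset U$ compact in item 2. For $f(x)=-|x|$, $U=V=(0,\infty)$ and the bump of Example 1, one has $F_\ep''=-2\varphi_\ep<0$ and $F_\ep<f$ on $(0,\ep)$ for every $\ep$, so no $\delta$ exists. The second is $\varphi\ge 0$ in item 4. For $f=\ind_{[0,\infty)}$ one has $F_\ep'=\varphi_\ep$, which changes sign whenever $\varphi$ does.

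\textbf{The genuine gap: quasiconvexity.} Your fallback cannot close this gap. Let $m$ be the turning point of $f$. Monotonicity on half-lines only shows that $F_\ep$ is nonincreasing on $(-\infty,m-\ep]$ and nondecreasing on $[m+\ep,\infty)$. On the window $[m-\ep,m+\ep]$ the shape of $\varphi$ decides the outcome, and for general nonnegative mollifiers the clause is false, even inside $\DD_E(\R)$.

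For a counterexample in one dimension, take $\varphi(x)=c\,x^2e^{-1/(1-x^2)}\ind_{(-1,1)}(x)$ and the quasiconvex function $f=-\ind_{[-\eta,\eta]}$ with $\eta\ll\ep$. Then $F_\ep(x)=-\int_{x-\eta}^{x+\eta}\varphi_\ep$. This gives $F_\ep(0)=O((\eta/\ep)^3)$, while $F_\ep(\pm\ep/2)$ is negative of order $\eta/\ep$. Hence $F_\ep(0)>\max\{F_\ep(-\ep/2),F_\ep(\ep/2)\}$, and the corresponding sublevel set is not an interval.

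The missing idea in dimension one is log-concavity of $\varphi$, which the bump of Example 1 has. Then $F_\ep'=Df*\varphi_\ep$, where the measure $Df$ changes sign once, from $-$ to $+$. A log-concave ($\mathrm{PF}_2$) kernel is variation-diminishing, so $F_\ep'$ changes sign at most once and in the same order; this is the mechanism behind Ibragimov's strong-unimodality theorem.

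For $n\ge 2$, no $\varphi\in\DD_M(\R^n)$ rescues the clause, log-concave or not. Set up the counterexample as follows:
\begin{itemize}
\item $C_1$ is a convex cone of small half-angle $\theta$ with apex $0$.
\item $C_2\subset C_1$ is a tiny ball centred on the axis at distance $\ep$ from the apex.
\item $h=\ind_{C_1}+M\ind_{C_2}$ with $M$ large.
\end{itemize}
Then $-h$ is quasiconvex. Yet $h*\varphi_\ep\ge 1$ at the centre of $C_2$ and at axis points deep inside $C_1$. At the axis point at distance $4\ep$ from the apex, which lies between these, $h*\varphi_\ep$ is at most a dimensional constant times $\|\varphi\|_\infty\tan^{n-1}\theta<1$. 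So $\{(-h)*\varphi_\ep\le -1\}$ is not convex.

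The quasiconvexity clause therefore has to be restricted, for example to $n=1$ with log-concave $\varphi$, rather than proved as stated.
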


See Figure \ref{fig:Figure_1_Example} for an illustration of the conventional mollification
and the directional mollification that will be introduced and studied in this
paper.

\subsection{Convex functionals and directional derivatives}

Before introducing the directional mollification, we need to present the definition of the (right) directional derivative, and a theorem that will be useful in the following sections.
\begin{definition}
    Let $f : \R^n \to \R^m$ be a function. Given $x, h \in \R^n$ the
    directional derivative of $f$ at $x$ in the direction of $h$ is
    defined as
    \begin{equation*}
        f^{\circ}(x)(h) := \lim_{t \to
            0^+}\frac{f(x+th)-f(x)}{t},
    \end{equation*}
    provided the limit exists. If the limit exists for all $h \in \R^n$, then
    $f$ is said to be directionally differentiable at $x$.
\end{definition}

\begin{theorem}[{\cite[Lemma 3.3 and Theorem
l    3.4]{jahn2007introduction}}]\label{thm:ConvexityProperties}
    If $f : \R^n \to \R$ is convex then it is directionally differentiable
    at all $x \in \R^n$ and, given $y \in \R^n$, then $f(x) \geq f(y) +
    f^{\circ}(x)(x-y)$.
    If $f$ is
    differentiable almost everywhere (not necessarily convex), then
    for any $y \in \R^n$, $f^{\circ}(x)(y) = \dotProduct{\nabla f(x)}{y}$ for
    almost all $x \in \R^n$.
    The same statements hold, but with reverse inequalities for concave
    functions.
\end{theorem}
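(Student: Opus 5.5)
The statement to prove is Theorem \ref{thm:ConvexityProperties}, which has three parts: directional differentiability of convex functions together with the subgradient-type inequality, the almost-everywhere identity $f^{\circ}(x)(y)=\dotProduct{\nabla f(x)}{y}$, and the concave analogue. I will treat the convex case first. The concave case then follows by applying the convex statements to $-f$: since $(-f)^{\circ}=-f^{\circ}$, every inequality reverses.

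\textbf{Existence of the directional derivative.} Fix $x,h\in\R^n$ and set $q(t):=\frac{f(x+th)-f(x)}{t}$ for $t>0$. Convexity of $t\mapsto f(x+th)$ gives the three-slope inequality, so $q$ is nondecreasing in $t$. Applying convexity at the point $x=\frac{1}{2}(x+th)+\frac12(x-th)$ gives $q(t)\geq \frac{f(x)-f(x-th)}{t}\geq f(x)-f(x-h)$ for $t\in(0,1]$, again using the monotonicity of slopes. Hence $q$ is monotone and bounded below as $t\to0^+$, so the limit exists and is finite.

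\textbf{The inequality.} Put $h=y-x$. Monotonicity of $q$ gives $f^{\circ}(x)(y-x)\leq q(1)=f(y)-f(x)$, that is, $f(y)\geq f(x)+f^{\circ}(x)(y-x)$. The statement is written as $f(x)\geq f(y)+f^{\circ}(x)(x-y)$, which reads as the same supporting-line inequality with the roles of the points relabelled. I will state carefully which form holds. The literal reading appears to be a typo, since for $f(x)=x^2$ at $x=0$ it would give $0\geq y^2$. I will therefore prove the standard form $f(y)\geq f(x)+f^{\circ}(x)(y-x)$, which is the form proved in the cited source.

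\textbf{The almost-everywhere identity.} Let $E$ be the null set off which $f$ is (Fréchet) differentiable. For $x\notin E$ we have $f(x+th)=f(x)+t\dotProduct{\nabla f(x)}{h}+o(t)$. Dividing by $t$ and letting $t\to0^+$ gives $f^{\circ}(x)(h)=\dotProduct{\nabla f(x)}{h}$ for every $h$, and in particular for the given $y$.

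\textbf{Main obstacle.} The only delicate point is the meaning of ``differentiable''. If it means only that partial derivatives exist a.e., then the identity can fail, and one needs either full differentiability or, for convex $f$, Rademacher or Alexandrov-type a.e. differentiability. I will assume Fréchet differentiability a.e., which is what the citation intends. For convex $f$ this assumption is automatic by Rademacher's theorem, since convex functions are locally Lipschitz.
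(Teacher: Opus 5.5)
Your proof is correct. The paper gives no proof of its own here and only cites Jahn; your argument (nondecreasing, bounded-below difference quotients, the first-order expansion at points of differentiability, and passing to $-f$ for the concave case) is the standard one that the citation stands in for. You are also right that the displayed inequality is misprinted, and your corrected form $f(y)\geq f(x)+f^{\circ}(x)(y-x)$ is exactly what the paper actually uses later, namely $f(x)\geq f(s)+f^{\circ}(s)(x-s)$ in the proof of the third statement of Theorem \ref{thm:TheoremThatSolvesNewRegularization}.
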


\section{Constructing the directional mollification}\label{sec:NewMollification}

\subsection{The improvement of conventional mollification}
As can be seen from Figure \ref{fig:Figure_1_Example}, the mollification of a
polygonal chain stays inside the convex hull of the original curve. This implies that, for a
convex real-valued function, its mollification is above (i.e., greater than or equal
to) it. As presented in Theorem \ref{thm:OurReferenceTheorem}, the convex hull property also holds
for any locally integrable function, not necessarily restricted to
polygonal chains. Moreover, we can also see that, for the
polygonal chain, the mollified curve does \textit{not} intersect the points that
define the curve; thus they cannot be considered as vertices for the
smoothed $C^{\infty}$ function. Nevertheless, as seen in Theorem
\ref{thm:OurReferenceTheorem}, the smoothness, convergence, and computational
aspects of mollification are highly desirable properties for curve smoothing.
For that purpose, we collect all of these desirable properties and state them
in a single problem to be solved in this work.
\begin{problem}[Improving the conventional mollification]\label{prob:NewRegProblem}
    Given $p \in [1,\infty]$ and a $p$-locally integrable function $f : \R^n \to \R$
    or $f : \R \to \R^n$, find an operator $T_{\ep}$, where $\ep > 0$, that acts
    on $f$ and creates a new function $T_{\ep}(f)$ with the same domain and
    codomain of $f$ such that:
    \begin{enumerate}
        \item $T_{\ep}(f)$ inherits the smoothness properties of
        mollifiers, that is, $T_{\ep}(f)$ is infinitely many times continuously
        differentiable.
        \item $T_{\ep}(f)$ can be made arbitrary close to $f$ by letting $\ep
        \to 0$. That is, it inherits the convergence properties of the
        mollifiers.
        \item If $f : \R^n \to \R$ is convex, then $T_{\ep}(f) \leq f$.
        \item If $f : \R \to \R^n$ is a polygonal chain in $\R^n$, the smoothed
        curve generated, $T_{\ep}(f):\R \to \R^n$ shall intersect all the vertices
        that define the chain.
        \item $T_{\ep}(f)$ is simple to compute numerically.
    \end{enumerate}
\end{problem}

\subsection{Solving Problem \ref{prob:NewRegProblem}}
From now on, we will impose several restrictions on the mollifier, for which we introduce the following function space.
\begin{definition}
    We define $\DD_M(\R^n)$ as the set of mollifiers that are nonnegative and whose support is $\overline{B}(0,1)$. That is,
    \begin{equation*}
        \DD_M(\R^n) := \{\varphi \in \DD(\R^n) \text{ mollifier } \mid \varphi \geq 0 \text{ and } \supp\varphi = \overline{B}(0,1)\}.
    \end{equation*}
\end{definition}

We are going to solve Problem \ref{prob:NewRegProblem} by adding to the mollified function a new term
that we call, the \textit{directional derivative term.}
\begin{definition}[Directional derivative term]\label{def:DirDerTerm}
    Let $f \in L_{\mathrm{loc}}^p(\R^n, \R)$ and $\varphi \in
    \DD_M(\R^n)$. For $x \in \R^n$
    define $f^{\circ}_x : \R^n \to \R$ as $s \in \R^n \mapsto f^{\circ}_x(s) :=
    f^{\circ}(s)(x-s)$ and let $f^{\circ}_x \in
    L^{q}_{\mathrm{loc}}(\R^n,\R)$ for some $1 \leq q \leq \infty$. We define the directional
    derivative term for $\ep > 0$, $D_{\ep} : \R^n \to \R$, as
    \begin{equation*}
        D_{\ep}(x) :=
        \int_{\R^n}f^{\circ}_x(s)\varphi_{\ep}(x-s)ds =
        \int_{\R^n}f^{\circ}(s)(x-s)\varphi_{\ep}(x-s)ds.
    \end{equation*}
\end{definition}
\begin{remark}
    Note that for any $\ep > 0$ and $x \in \R^n$, $D_{\ep}(x)$  can be thought
    as a weighted mean of the directional derivatives centered at $x \in \R^n$.
\end{remark}

We finally introduce the concept of \textit{directional mollification}.
\begin{definition}[Directional mollification]\label{def:NewMollification}
    Let $p \in [1,\infty]$, $f \in L^p_{\mathrm{loc}}(\R^n,\R)$ and $\varphi \in
    \DD_M(\R^n)$. Let $f$ be directionally differentiable at any
    point, and for any $x \in \R^n$ let $f^{\circ}_x \in L^{q}_{\mathrm{loc}}(\R^n,\R)$
    for some $q \in [1,\infty]$.  For $\ep > 0$, we define
    the directional mollification of $f$, $\FF_{\ep} : \R^n \to \R$ as
    $\FF_{\ep}(x) :=
    F_{\ep}(x)+D_{\ep}(x)$, i.e.,
    \begin{align*}
        &\FF_{\ep}(x) :=
        \underbrace{(f * \varphi_{\ep})(x)}_{F_{\ep}(x)} +
        \underbrace{\int_{\R^n}f^{\circ}(s)(x-s)\varphi_{\ep}(x-s)ds}_{D_{\ep}(x)} = \int_{\R^n}(f(s)+f^{\circ}(s)(x-s))\varphi_{\ep}(x-s)ds,
    \end{align*}
    which is the sum of the mollification and the directional derivative term.
\end{definition}
Note that the previous definition makes sense because it is asked that the
function $f$, and its directional derivative are locally integrable functions.
Clearly, if $f : \R \to \R^n$ we just need to apply the previous definition to
each component of $f$.

We now present the theorem that will enable us, under some extra conditions
about $f$, to solve Problem \ref{prob:NewRegProblem} up to its
fourth statement. The fourth one is a particular application of the directional
mollification and will be solved in Section \ref{sec:ApplicationThreePointTwoSegments}.
The last point, regarding the computational cost,  can be seen to be relatively inexpensive, as
Definition \ref{def:NewMollification} involves straightforward integral calculations in a
compact support of $\R^n$ with locally integrable and smooth functions. Note that
the Theorem is presented for functions from $\R^n$ to $\R$. As previously mentioned, for a polygonal
chain, which is a function from $\R$ to $\R^n$, we just need to apply the results of the
theorem component-wise, i.e., for real-valued functions defined in $\R$. Therefore,
Theorem \ref{thm:TheoremThatSolvesNewRegularization} presents the solution
of Problem \ref{prob:NewRegProblem} to its full generality, allowing the use of the
directional mollification not only for curves in space, but also for surfaces.

\begin{theorem}\label{thm:TheoremThatSolvesNewRegularization}
    Let $f : \R^n \to \R$ satisfy the assumptions of
    Definition \ref{def:NewMollification}, $\varphi \in \DD_M(\R^n)$, and let
    $\ep > 0$. For the directional mollification of $f$, $\FF_{\ep} : \R^n \to
    \R$, the following statements are true.
    \begin{enumerate}[label=\arabic*., ref=\thetheorem.\arabic*]
        \item \label{thm:SolvReg1} If $f$ is differentiable almost everywhere, (that is,
        $\nabla f$ exists almost everywhere on $\R^n$), and
        $\partial_i f \in L_{\mathrm{loc}}^{q_i}(\R^n)$ for some $q_i \in [1,\infty]$
        for all $i \in \{1,\dots,n\}$, then $D_{\ep} \in C^{\infty}(\R^n)$
        and
        $D_{\ep} = \sum_{i=1}^{n}\partial_{i}f * \pi_i \varphi_{\ep}$
        where $x = (x_1,\dots,x_n) \in \R^n \mapsto \pi_i(x) = x_i$
        is the projection of the $i$-th component of a point in $\R^n$.
        Moreover, for any multi-index $\alpha \in \N_0^n$,
        $\partial^{\alpha}D_{\ep} = \sum_{i=1}^{n}\partial_{i} f *
        \partial^{\alpha}(\pi_i\varphi_{\ep})$, where
        $\partial^{\alpha}(\pi_i\varphi_{\ep}) =
        \sum_{\beta\leq\alpha}\partial^{\beta}
        \pi_i\partial^{\alpha-\beta}\varphi_{\ep}$.

        \item \label{thm:SolvReg2} If $f$ is locally Lipschitz, $\FF_{\ep} \to f$ as $\ep \to 0^+$ uniformly on compact subsets of $\R^n$.
        \item If $f$ is convex then
        $\FF_{\ep} \leq f$ and $D_{\ep} \leq 0$.
        \item If $f(x) = \dotProduct{a}{x} + c$ with $c \in \R$, $a \in
        \R^n$, and $\varphi$ is an even function,  then
        it holds that $\FF_{\ep} = f$ for all $\ep > 0$.
    \end{enumerate}
\end{theorem}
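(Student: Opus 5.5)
I will treat the four items separately. The common first step is to use the a.e.\ identity from Theorem \ref{thm:ConvexityProperties}: wherever $\nabla f(s)$ exists, $f^{\circ}(s)(x-s)=\dotProduct{\nabla f(s)}{x-s}$. Since null sets do not affect the integral, this gives
\begin{equation*}
D_{\ep}(x)=\int_{\R^n}\sum_{i=1}^n\partial_i f(s)\,(x_i-s_i)\varphi_{\ep}(x-s)\,ds=\sum_{i=1}^n(\partial_i f*\pi_i\varphi_{\ep})(x).
\end{equation*}
One point needs care: Theorem \ref{thm:ConvexityProperties} is stated for a fixed direction $y$, while here the direction $x-s$ depends on $s$. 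I will handle this by using the pointwise fact that at every point $s$ of differentiability the directional derivative in every direction equals $\dotProduct{\nabla f(s)}{h}$. That formula then holds for all $h$ at once on the full-measure set of differentiability points. For item 1, each $\pi_i\varphi_{\ep}$ belongs to $\DD(\R^n)$ and $\partial_i f\in L^{q_i}_{\mathrm{loc}}\subset L^1_{\mathrm{loc}}$. So the proof of Theorem \ref{thm:PropertiesOfMollifying}(1), which differentiates under the integral by dominated convergence on compact support, applies verbatim with $\pi_i\varphi_{\ep}$ in place of $\varphi_{\ep}$; it does not use the normalization. This yields $C^\infty$ and $\partial^{\alpha}D_{\ep}=\sum_i\partial_i f*\partial^{\alpha}(\pi_i\varphi_{\ep})$, and the Leibniz rule gives the stated expansion.

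For item 2, I will write $\FF_{\ep}=F_{\ep}+D_{\ep}$. Locally Lipschitz functions are continuous, so $F_{\ep}\to f$ uniformly on compacts by Theorem \ref{thm:PropertiesOfMollifying}(3). For $D_{\ep}$, fix a compact $K$ and let $L$ be a Lipschitz constant of $f$ on the $1$-neighbourhood $K_1$ of $K$. Directional derivatives of a locally Lipschitz function satisfy $|f^{\circ}(s)(h)|\le L\|h\|$ whenever they exist. Since $\varphi_{\ep}(x-s)\neq0$ forces $\|x-s\|\le\ep$, for $x\in K$ and $\ep<1$ we get
\begin{equation*}
|D_{\ep}(x)|\le \int L\|x-s\|\varphi_{\ep}(x-s)\,ds\le L\ep\to0
\end{equation*}
uniformly on $K$. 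The only subtle point is that Lipschitz functions are differentiable a.e.\ (Rademacher); this gives existence of the integrand a.e., and directional differentiability is in any case assumed.

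For item 3, Theorem \ref{thm:ConvexityProperties} gives, for convex $f$, the inequality $f(x)\ge f(s)+f^{\circ}(s)(x-s)$ for all $s$. I would apply it with base point $s$ and compare with $x$; the paper writes it as $f(x)\ge f(y)+f^{\circ}(x)(x-y)$, and the version I need is the standard one, $f(x)\ge f(s)+f^{\circ}(s)(x-s)$. Multiplying by $\varphi_{\ep}(x-s)\ge0$ and integrating with $\int\varphi_{\ep}=1$ gives $\FF_{\ep}(x)\le f(x)$. For $D_{\ep}\le0$, I will use that $f^{\circ}(s)(\cdot)$ is sublinear and that $f^{\circ}(s)(x-s)\le f(x)-f(s)$. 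Integrating gives $D_{\ep}(x)\le f(x)-F_{\ep}(x)$, and Theorem \ref{thm:OurReferenceTheorem}(1) gives $F_{\ep}\ge f$, so $D_{\ep}\le f-F_{\ep}\le 0$.

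For item 4, the directional derivative of an affine $f$ is $f^{\circ}(s)(h)=\dotProduct{a}{h}$ everywhere. Hence $D_{\ep}(x)=\int\dotProduct{a}{x-s}\varphi_{\ep}(x-s)ds=\int\dotProduct{a}{u}\varphi_{\ep}(u)du$, which vanishes because $\varphi_{\ep}$ is even. Combined with $F_{\ep}=f$ from Theorem \ref{thm:OurReferenceTheorem}(3), this gives $\FF_{\ep}=f$. I expect the main obstacle to be the measure-theoretic justification in item 1, namely replacing $f^{\circ}(s)(x-s)$ by the gradient expression simultaneously for all $x$. This is resolved by working on the full-measure set of differentiability points, where the identity holds for every direction.
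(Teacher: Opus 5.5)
Your proposal is correct and follows essentially the same route as the paper's proof: rewriting $D_{\ep}$ through $\nabla f$ and differentiating the convolutions against the kernels $\pi_i\varphi_{\ep}\in\DD(\R^n)$, the estimate $|D_{\ep}|\le L\ep$ on compacts, the supporting inequality $f(x)\ge f(s)+f^{\circ}(s)(x-s)$ combined with $F_{\ep}\ge f$, and oddness of $u\mapsto\dotProduct{a}{u}\varphi_{\ep}(u)$. Your extra care only tightens details the paper leaves implicit, namely a Lipschitz constant on the fixed set $K_1$ independent of $\ep$, all directions handled at once at points of differentiability, and the correctly oriented supporting inequality. One caveat, shared with the paper: the Leibniz rule actually gives $\partial^{\alpha}(\pi_i\varphi_{\ep})=\sum_{\beta\le\alpha}\binom{\alpha}{\beta}\partial^{\beta}\pi_i\,\partial^{\alpha-\beta}\varphi_{\ep}=\pi_i\partial^{\alpha}\varphi_{\ep}+\alpha_i\partial^{\alpha-e_i}\varphi_{\ep}$, where the last term is absent when $\alpha_i=0$, so the stated expansion holds only once its binomial coefficients are restored (compare the factor $n$ in Lemma \ref{lem:DirectionalDerivativeAndNormalDerivative}).
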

\begin{proof}
We prove each statement separately.
\begin{enumerate}
    \item  Since $F_{\ep} \in C^{\infty}(\R^n)$ we just need to
    prove that $D_{\ep} \in C^{\infty}(\R^n)$. Note that for any
    $x \in \R^n$ it holds that $D_{\ep}(x) =
    \int_{\R^n}\dotProduct{\nabla f(s)}{x-s}\varphi_{\ep}(x-s)ds$,
    because $\nabla f$ exists almost everywhere in $\R^n$. It is not difficult to
    check that  $D_{\ep}(x) =
    \int_{\R^n}\sum_{i=1}^{n}\partial_{i}f(s)\pi_{i}(x-s)\varphi_{\ep}(x-s)ds =
    \sum_{i=1}^{n}(\partial_{i}f * \pi_i \varphi_{\ep})(x)$, for all $x \in
    \R^n$. Since $\partial_{i}f \in L_{\mathrm{loc}}^{q_i}(\R^n)$ for all $i \in
    \{1,\dots,n\}$, we can use the results of \cite[Proposition
    4.20]{brezis2011functional} to conclude that $D_{\ep} \in C^{\infty}(\R^n)$,
    and that $\partial^{\alpha}(\partial_{i}f * \pi_i\varphi_{\ep}) =
    \partial_{i}f * \partial^{\alpha}(\pi_i\varphi_{\ep})$ since
    $\pi_i\varphi_{\ep} \in \DD(\R^n)$ for all $i \in \{1,\dots,n\}$ and all
    mollifier $\varphi$. The last equality comes from the Leibniz rule. This solves
    the first statement of Problem \ref{prob:NewRegProblem}.

    \item Assume $f$ is locally Lipschitz. Let $K$ be a compact subset of $\R^n$, and let $K_{2\ep} := \{x \in \R^n \mid \inf_{y\in K}\|x-y\| \leq 2\ep\}$, which is also compact. Since $K_{2\ep}$ is compact, $f|_{K_{2\ep}}$ is (globally) Lipschitz, with Lipschitz constant $L = L(K_{2\ep}) > 0$. Let $x, y \in K$ and $s \in \overline{B}(0,\ep)$. Observe that $\|x-y-s\| \leq \|x-y\|+\|s\|$, from which it follows that $\inf_{y\in K}\|x-y-s\| \leq \|s\| \leq \ep$, that is $(x-s) \in K_{2\ep}$. Moreover, for $0<t \leq 1$, we have that $\inf_{y\in K}\|(x-s)+ts - y\| \leq \inf_{y\in K}\|x - s - y\| + t\|s\| \leq (t+1)\ep \leq 2\ep$, that is
    $(x-s)+ts \in K_{2\ep}$. Therefore
    \begin{equation*}
        \frac{|f((x-s)+ts)-f(x-s)|}{t} \leq \frac{L\|ts\|}{t} \leq L\ep,
    \end{equation*}
    for all $t \in (0,1]$. Whence $|f^{\circ}(x-s)(s)| \leq L\|s\| \leq L\ep$ for all $x \in K$ and all $s \in \overline{B}(0,\ep)$. It follows that
    \begin{equation*}
        \sup_{x\in K}|D_{\ep}(x)| \leq \sup_{x\in K}\int_{\overline{B}(0,\ep)}|f^{\circ}(x-s)(s)|\varphi_{\ep}(s)ds \leq L\ep \sup_{x\in K}\int_{\overline{B}(0,\ep)}\varphi_{\ep}(s)ds = L\ep.
    \end{equation*}
    Therefore, $D_{\ep} \to 0$ as $\ep \to 0^{+}$ uniformly on compact subsets of $\R^n$, and since $f$ is continuous, by Theorem \ref{thm:OurReferenceTheorem} we have that $\FF_{\ep} \to 0$ as $\ep \to 0^+$ uniformly on compact subsets of $\R^n$.

    \item Let $x \in \R^n$. Using the fact that
    $\int_{\R^n}\varphi_{\ep} = 1$, then $\FF_{\ep}(x)-f(x) =
    \int_{\R^n}(f(s)+f^{\circ}(s)(x-s) -
    f(x))\varphi_{\ep}(x-s)ds$. However, by Theorem \ref{thm:ConvexityProperties}, $f(x) \geq f(s) +
    f^{\circ}(s)(x-s)$. Being $\varphi_{\ep} \geq 0$, $\FF_{\ep}(x)-f(x)
    \leq 0$, from which the first result follows. Moreover by
    Theorem \ref{thm:OurReferenceTheorem}, $f \leq F_{\ep}$, thus $f
    \geq \FF_{\ep} = F_{\ep} + D_{\ep} \geq f + D_{\ep}$ which implies
    $D_{\ep} \leq 0$. This solves the third statement of Problem \ref{prob:NewRegProblem}.
    \item Note that for any $x \in \R^n$,
    $F_{\ep}(x) = (f * \varphi_{\ep})(x) =
    \int_{\R^n}\dotProduct{a}{x-s}\varphi_{\ep}(s)ds + c
    = \dotProduct{a}{x} + c$,
    because $s \mapsto \dotProduct{a}{s}\varphi_{\ep}(s)$ is an
    odd function integrated over a ball centered at the origin.
    Moreover,
    since $f \in C^1(\R^n,\R)$ then $f^{\circ}(s)(x-s) = \dotProduct{\nabla
        f(s)}{x-s}
    = \dotProduct{a}{x-s}$ which leads to  $D_{\ep}(x)  = 0$,
    by the same conclusion as above. Thus, $\FF_{\ep} = F_{\ep} = f$.
\end{enumerate}
\end{proof}
We now compare conventional and directional mollification, highlighting the properties gained and those that may be lost in the process.

\subsection{Comparison with conventional mollification}

\begin{table}[t]
\centering
\small
\caption{A summary of the primary distinctions between the conventional and directional mollification approaches. It is always assumed that $\varphi \in \DD_M(\R^n)$.}
\label{tab:my-table}
\begin{tabularx}{\textwidth}{>{\fontseries{b}\selectfont\raggedright\arraybackslash}p{3.8cm}XX}

\toprule
Property / Case & \textbf{Conventional Mollification} & \textbf{Directional Mollification} \\
\midrule
Assumptions & $f$ locally integrable & $f$ locally integrable and directionally differentiable at all points with locally integrable directional derivative \\ \addlinespace

Equation & $F_{\ep}(x) = \int_{\R^n}f(s)\varphi_{\ep}(x-s)ds$ & $\FF_{\ep}(x) = \int_{\R^n}(f(s)+f^{\circ}(s)(x-s))\varphi_{\ep}(x-s)ds$ \\ \addlinespace

Regularity & $F_{\ep} \in C^{\infty}$, Theorem \ref{thm:PropertiesOfMollifying} & If $\nabla f$ exists a.e., $\FF_{\ep} \in C^{\infty}$, Theorem \ref{thm:TheoremThatSolvesNewRegularization} \\ \addlinespace

Convergence on compact subsets & $f$ continuous, Theorem \ref{thm:PropertiesOfMollifying} & $f$ locally Lipschitz, Theorem \ref{thm:TheoremThatSolvesNewRegularization} \\ \addlinespace

Affine $f$ and even mollifier & $F_{\ep} = f$ for all $\ep > 0$, Theorem \ref{thm:OurReferenceTheorem} & $\FF_{\ep} = f$ for all $\ep > 0$, Theorem \ref{thm:TheoremThatSolvesNewRegularization} \\ \addlinespace

Convex $f$ & $F_{\ep}$ is convex and $f \leq F_{\ep}$ for all $\ep > 0$, Theorem \ref{thm:OurReferenceTheorem} & $\FF_{\ep}$ is not necessarily convex for all $\ep > 0$, but $\FF_{\ep} \leq f$ for all $\ep > 0$, Theorem \ref{thm:TheoremThatSolvesNewRegularization} \\ \addlinespace

Locally convex $f$ at a point & $F_{\ep}$ is locally convex for sufficiently small $\ep > 0$, Theorem \ref{thm:OurReferenceTheorem} & $\FF_{\ep}$ is locally convex for sufficiently small $\ep > 0$, Theorem \ref{thm:TheoremThatSolvesNewRegularization} \\ \addlinespace

Monotonic $f$ & $F_{\ep}$ is monotonic for all $\ep > 0$, Theorem \ref{thm:OurReferenceTheorem} & $\FF_{\ep}$ is not monotonic in general. It is monotonic if $f' = c$ almost everywhere for some $c \in \R$, Section \ref{subsec:ConvexityNotAlwaysTrue} \\ \addlinespace

Length of a mollified curve & Decreases relative to the length of $f$, Theorem \ref{thm:OurReferenceTheorem} & Depends on the curve; strictly increases for polygonal chains, Proposition \ref{prop:LengthAllSegments} \\ \addlinespace

Enclosure of curves & $F_{\ep}$ is contained in the convex hull of the image of $f$ for all $\ep > 0$, Theorem \ref{thm:OurReferenceTheorem} & $\FF_{\ep}$ is not, in general, contained in the convex hull of the image of $f$, Section \ref{subsec:RemarkAboutLength} \\ \addlinespace

Computation & Convolution of $f$ with a mollifier. Complexity depends on the algorithm used & Convolution of $f + f^{\circ}$ with a mollifier. Same numerical complexity as conventional mollification \\ \addlinespace

Vertex preservation of polygonal chains & No, due to convex hull enclosure & Yes, Theorem \ref{thm:ConvergenceToLinesToSwitchPolygonalChain} \\ \addlinespace

Curvature upper bound formulas & Yes, \cite{gonzalezcalvin2025efficientgenerationsmoothpaths} & Yes, Proposition \ref{prop:CurvatureBoundingOutside} and Lemma \ref{lem:CurvatureApproachPractical} \\ \addlinespace

Flexibility & Varying $\ep > 0$ and the choice of mollifier. & Varying $\ep > 0$ and the choice of mollifier and using $F_{\ep}+\gamma D_{\ep}$ for $\gamma \in \R$ generalizes both $F_{\ep}$ and $\FF_{\ep}$, Section \ref{sec:Combination} \\
\bottomrule
\end{tabularx}
\end{table}

Before detailing the properties that are not preserved under the directional
mollification framework, we provide a systematic overview of the fundamental
differences between the conventional and directional mollification approaches.
This comprehensive side-by-side contrast is presented in Table
\ref{tab:my-table}. For completeness, the table also integrates the specific
geometric properties concerning polygonal chains---which are rigorously
evaluated in Sections \ref{sec:ApplicationThreePointTwoSegments} and
\ref{sec:Combination}---alongside their corresponding textual cross-references.
As shown in the \textit{computation} row, directional mollification maintains
the same numerical complexity as the conventional approach. This equivalence
arises because, numerically, summation scales linearly, whereas the numerical
complexity of a convolution is---while dependent on the algorithm---higher than
linear. However, this condition holds only if the directional mollification is
computationally evaluated as the convolution of a sum rather than as two
separate convolutions. In the latter case, the total computational time would
roughly double compared to the conventional approach.
\subsubsection{Convexity and monotonicity are not generally
preserved}\label{subsec:ConvexityNotAlwaysTrue}
 Theorem \ref{thm:OurReferenceTheorem}
presents a result about monotonicity that does not always hold for the
directional mollification.
\begin{figure}[pos=t]
    \centering
    \includegraphics[width=\linewidth]{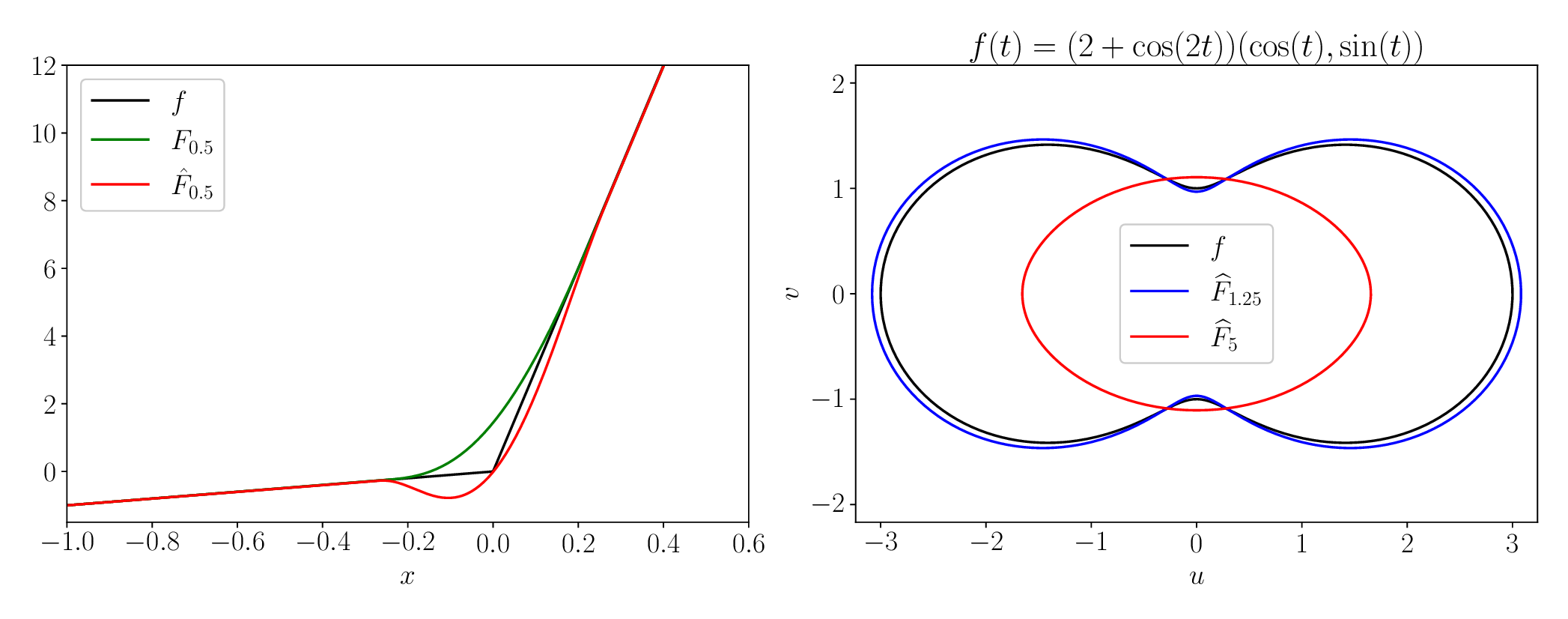}
    \caption{Representation of the conventional and directional mollification of
        the strictly increasing function $f(x)= x\ind_{(-\infty,0)}(x) +
        30x\ind_{[0,\infty)}(x)$ (left picture) and the function $t \in [0,2\pi] \mapsto
        (2+\cos(2t))(\cos(t),\sin(t)) \in \R^2$ (right picture, where the coordinates of $\R^2$ are shown as $(u,v)$), where in both cases the used
        mollifier $\varphi$ is
        as in \eqref{eq:OurMollifier}. For the latter, the convolution is carried out by extending its domain to $\R$. In the left picture, the black plot represents the original
        function $f$, while the green one is its conventional mollification, and the red
        one is its directional mollification, both with $\ep = 0.5$. The right picture
        represents the original function $f$ as a black plot, while the blue and red plots
        the directional mollifications $\FF_{\ep}$ for $\ep = 1.25$ and $\ep = 5$,
        respectively.}
    \label{fig:MonotonicityNotPreserved}
\end{figure}
It is in general false that if $f : \R \to \R$ is monotonic, then $\FF_{\ep}$ is
monotonic for all $\ep > 0$. See, for example, the left picture of Figure
\ref{fig:MonotonicityNotPreserved} (cf.  Theorem \ref{thm:OurReferenceTheorem}).
Nevertheless, a possible situation must be highlighted here. Suppose that the
desired curve consists of the sum of at most countably infinite shifted
Heaviside functions. Call this function $f : \R \to \R$. If $\lambda$ is the
Lebesgue measure, then $\lambda(\Z) = 0$, and in turn $f' = 0$ almost everywhere
on $\R$, which implies that $\FF_{\ep} = F_{\ep}$, i.e., the conventional
mollification and directional mollification coincide. This can be generalized
for a piecewise linear function with constant slope if $\varphi$ is even, $\supp
\varphi = [-1,1]$ and $\varphi \geq 0$. For if $f' = c$ almost everywhere, then
$D_{\ep}(x) = c\int_{\R}(x-s)\varphi_{\ep}(x-s)ds = c
\int_{[-\ep,\ep]}s\varphi_{\ep}(s)ds = 0$, because $s \mapsto s\varphi_{\ep}(s)$
is an odd function integrated over a symmetric interval. Therefore, if $f$ is
monotone increasing (resp.  decreasing) and $f' = c$ with $c \in \R$ almost
everywhere on $\R$, then $\FF_{\ep}$ is also monotone increasing (resp.
decreasing), because  by Theorem \ref{thm:OurReferenceTheorem}, $F_{\ep}$ is
monotone increasing (resp. decreasing). In addition, it is clear from Figure
\ref{fig:Figure_1_Example} or Figure \ref{fig:MonotonicityNotPreserved} that
even if $f$ is convex, it does not hold that $\FF_{\ep}$ is convex for all $\ep
> 0$.

\subsubsection{Length and enclosure of curves}\label{subsec:RemarkAboutLength}
The two final properties of Theorem \ref{thm:OurReferenceTheorem} regarding the
enclosure and length of $F_{\ep}$ deal with functions of the form $f:[a,b] \to
\R^n$.  It may seem from Figure \ref{fig:Figure_1_Example}, or Figure
\ref{fig:MonotonicityNotPreserved}  that, for any $\ep > 0$, $\LL(\FF_{\ep})
\geq \LL(f)$ and $f(\R) \subset \co(\FF_{\ep}(\R))$.  Nevertheless, this is not
always the case in the directional mollification. Consider as a counterexample
the (continuously differentiable) planar curve of the right picture in Figure
\ref{fig:MonotonicityNotPreserved}. Clearly, none of these properties hold for
$\ep = 5$, while they hold true for $\ep = 1.25$. Nevertheless, by the definition of the length of a curve, $\LL(\FF_{\ep}) \leq
\LL(F_{\ep}) + \LL(D_{\ep}) \leq \LL(f) + \LL(D_{\ep})$ for any $\ep > 0$.

\section{Directional mollification of polygonal chains}\label{sec:ApplicationThreePointTwoSegments}

Here we solve the fourth statement of Problem \ref{prob:NewRegProblem},
but before proceeding, we present a small lemma that will be used in this section.
Although its proof is elementary calculus, we still prove it to shed some light
on the notation. Moreover, we also present the formal equation of a polygonal
chain.
\begin{lemma}\label{lem:DirectionalDerivativeAndNormalDerivative}
    Let $f \in L^1_{\mathrm{loc}}(\R,\R^n)$, and suppose that the
    ordinary derivative $Df = (Df_1, Df_2, \dots, Df_n) : \R \to \R^n$
    exists almost everywhere. Then for all $s \in \R$ and almost all
    $x \in \R$ it holds that $f^{\circ}(x)(s) = s Df(x)$.
    Moreover, if $Df_i \in L^{q_i}_{\mathrm{loc}}(\R,\R^n)$, for some $q_i \in [1,\infty]$,
    for all $i \in \{1,\dots,n\}$, then for any $x \in \R$ and $n \in \N$,
    \begin{align*}
        D_{\ep}^{(n)}         &= n(Df * \varphi_{\ep}^{(n-1)}) + Df * \id
        \varphi_{\ep}^{(n)}.
    \end{align*}
\end{lemma}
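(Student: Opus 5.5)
The plan has two parts: identify the directional derivative with $s\,Df(x)$ wherever $Df$ exists, and then differentiate the resulting convolution using Leibniz's rule.

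\textbf{First claim.} Fix a point $x$ at which $Df(x)$ exists; this is the almost-everywhere set in the statement. Take any $s \in \R$ and work component-wise.
\begin{itemize}
\item If $s=0$, the difference quotient vanishes identically, so both sides are $0$.
\item If $s \neq 0$, write
\[
\frac{f_i(x+ts)-f_i(x)}{t} = s\,\frac{f_i(x+ts)-f_i(x)}{ts}.
\]
As $t\to 0^+$ we have $ts\to 0$, so the quotient converges to $Df_i(x)$. Hence $f_i^{\circ}(x)(s) = s\,Df_i(x)$.
\end{itemize}
This is the one-dimensional case of Theorem \ref{thm:ConvexityProperties}, but I will do it directly to fix notation.

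\textbf{Convolution form of $D_\ep$.} Substituting the first claim into Definition \ref{def:DirDerTerm} gives, for every $x$,
\[
D_\ep(x)=\int_\R Df(s)\,(x-s)\,\varphi_\ep(x-s)\,ds = \bigl(Df * \id\,\varphi_\ep\bigr)(x).
\]
The identity $f^{\circ}(s)(x-s)=(x-s)Df(s)$ may fail on the null set where $Df$ does not exist, but that set does not affect the integral. This matches the formula in Theorem \ref{thm:TheoremThatSolvesNewRegularization}.\ref{thm:SolvReg1} with $n=1$ and $\pi_1=\id$.

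\textbf{Derivatives.} The kernel $\id\,\varphi_\ep$ lies in $\DD(\R)$ and $Df_i\in L^{q_i}_{\mathrm{loc}}$. So, as in the proof of Theorem \ref{thm:TheoremThatSolvesNewRegularization} (via \cite[Proposition 4.20]{brezis2011functional}), derivatives pass onto the kernel:
\[
D_\ep^{(n)} = Df * (\id\,\varphi_\ep)^{(n)}.
\]
Now apply Leibniz's rule. Since $\id'=1$ and $\id^{(k)}=0$ for $k\ge 2$, only two terms survive:
\[
(\id\,\varphi_\ep)^{(n)} = \id\,\varphi_\ep^{(n)} + n\,\varphi_\ep^{(n-1)}.
\]
Distributing the convolution over this sum gives exactly the stated identity.

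\textbf{Main obstacle.} The only real subtlety is justifying that derivatives may be passed onto the kernel when $Df$ is merely locally integrable. This is covered by the compact support of $\id\,\varphi_\ep$: the differentiation-under-the-integral argument only needs $Df$ integrable on the compact neighbourhood $[x-\ep-1,\,x+\ep+1]$. Everything else is routine.
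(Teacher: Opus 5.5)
Your proof is correct and follows essentially the same route as the paper. The paper gets $f^{\circ}(x)(s)=s\,Df(x)$ by applying the chain rule to $g(t)=f(x+ts)$ at $t=0$, which is your rescaled difference quotient, and then obtains the derivative formula from Theorem \ref{thm:TheoremThatSolvesNewRegularization} plus induction, where you invoke \cite[Proposition 4.20]{brezis2011functional} and the Leibniz rule directly. Your version is slightly cleaner in two respects: the explicit Leibniz expansion shows the binomial factor $\binom{n}{1}=n$ (the multi-index formula in Theorem \ref{thm:TheoremThatSolvesNewRegularization} omits binomial coefficients), and your separate handling of $s=0$ and of the null set where $Df$ fails to exist tightens the paper's rather loose ``almost everywhere in $t$'' chain-rule step.
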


\begin{proof}
    Let $g : \R \to \R^n$ be $g(t) = f(x+ts)$ with $s \in \R$ and $x\in \R$ such
    that $Df(x)$ exists. By the chain rule $g'(t) = Df(x+ts)s$ almost everywhere
    in $\R$. In particular, exists for $t=0$ and takes the value $g'(0) =
    Df(x)s$. Therefore $g(t)-g(0) = f(x+ts) - f(x)$, and so $Df(x)s = g'(0) =
    \lim_{t\to 0^+}\frac{f(x+ts)-f(x)}{t} = f^{\circ}(x)(s)$, almost everywhere
    $x \in \R$. The integration results follow from  Theorem
    \ref{thm:TheoremThatSolvesNewRegularization} and induction applied to each
    component of $f$.
\end{proof}

Although polygonal chains were introduced in the notation section, let us define
them explicitly, since we will make use of their equation from now on.
\begin{definition}\label{def:polygonalchain}
    Let $p \in \N$ with $p \geq 2$ and let $(P_0, \dots, P_p) \subset \R^n
    \times \cdots \times \R^n$ be a set of ordered points called knots or
    vertices. We define the $p+1$ knots/vertices polygonal chain in $\R^n$ to be
    the function $f: [0,p] \to \R^n$, expressed as
    \begin{equation}\label{eq:polygonalchain}
        f(t) := P_{r-1} + (P_{r} - P_{r-1})(t-(r-1)),
    \end{equation}
    where  $r \in \{1,\dots,p\}$ and  $t \in [r-1,r]$.
    To be able to convolve the previous function,
    we define its extension
    $\bar{f} : \R \to \R^n$ as
    \begin{equation}\label{eq:polygonalchainextended}
        \bar{f}(t) := \begin{cases}
            P_0 + (P_1-P_0)t, & t \leq 1 \\
            P_{r-1} + (P_{r} - P_{r-1})(t-(r-1)), & r \in \{2,\dots,p\}, t \in [r-1,r] \\
            P_{p-1} + (P_{p}-P_{p-1})(t-(p-1)), & t \geq p
        \end{cases}.
    \end{equation}
    Note that this function is locally Lipschitz, hence locally integrable and
    differentiable almost everywhere with locally integrable derivative. Thus,
    the requirements of Theorem \ref{thm:TheoremThatSolvesNewRegularization}
    hold.
\end{definition}
A seven-vertex polygonal chain was shown in Figure \ref{fig:Figure_1_Example},
as well as its conventional and directional mollification. As stated in the
definition of the polygonal chain, the assumptions of Theorem
\ref{thm:TheoremThatSolvesNewRegularization} and Lemma
\ref{lem:DirectionalDerivativeAndNormalDerivative} hold automatically.  Thus, we
can compute the directionally mollified curve as $ \FF_{\ep} = (\bar{f} *
\varphi_{\ep}) + (\id \varphi_{\ep} * D\bar{f}) = (\bar{f}_i *
\varphi_{\ep})_{i=1}^{n} + (D\bar{f}_i * \id \varphi_{\ep} )_{i=1}^{n}$.
Moreover, note that the assumptions of Theorem \ref{thm:SolvReg1} and
\ref{thm:SolvReg2} hold for \eqref{eq:polygonalchainextended}, which implies the
directionally mollified curve converges both pointwise and uniformly on compact
subsets of $\R$ to the curve. We will always make use of the extended definition
of \eqref{eq:polygonalchain} since it is needed to carry out the convolution
properly. Nevertheless, we will always show the curves in their original domain
of definition, which is the one of interest.

Since we are going to exploit the properties of a mollifier's evenness, we now define the following.
\begin{definition}
    We define $\DD_{E}(\R^n) := \{\varphi \in \DD_M(\R^n) \mid \varphi \text{ is an even function}\}$, i.e., the set of even nonnegative mollifiers whose support is $\overline{B}(0,1)$.
\end{definition}
\subsection{Vertex preservation of polygonal chains under directional mollification}\label{subsec:VertexPreservation}
We now prove that the directional mollification keeps the vertices of
the polygonal chain invariant, and we also compute explicit neighborhoods in which
the polygonal chain is equal to the conventional and directional mollification
approaches. See Figure \ref{fig:Mollification} for a representation of these claims and
the sets computed in the following theorem.
\begin{figure}[pos=t]
    \centering
    \includegraphics[width=\linewidth]{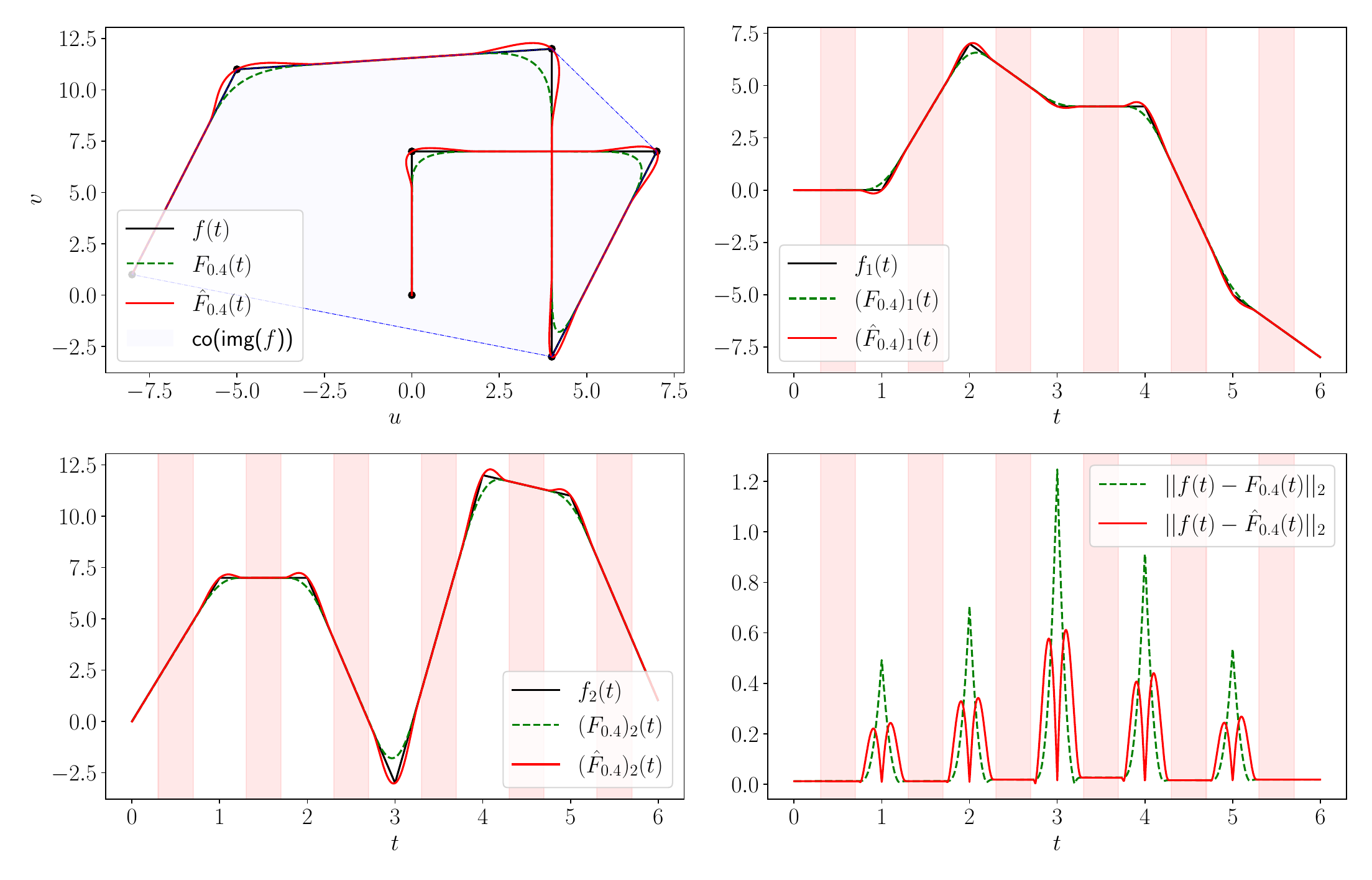}
    \caption{Representation of the conventional and directional mollification
        approaches for a $p=6$ polygonal chain in $\R^2$ defined as in \eqref{eq:polygonalchain},
        where the coordinates of $\R^2$ are expressed as $(u,v)$.
        The top left picture represents in black the original polygonal chain curve $f$,
        in dashed green its conventional mollification $F_{0.4}$, in red its
        directional mollification $\FF_{0.4}$, and as a blue transparent patch the convex
        hull of the vertices. In all cases, the mollifier used is the one presented
        in \eqref{eq:OurMollifier}. The top right and bottom left pictures represent, respectively,
        the first and second components of the polygonal chain, as well as its conventional
        and directional mollifications with the same color codes as in the top-left picture. The
        solid red regions represent the sets $V_{r}$ of Theorem \ref{thm:ConvergenceToLinesToSwitchPolygonalChain} for $r \in \{1,\dots,p\}$.
        The bottom right picture represents the errors (computed using the Euclidean norm)
        between the original function and the conventional mollification in dashed green,
        and the directional mollification in red. The solid red regions represent,
        as in the top-right and bottom-left pictures, the sets $V_r$.}
    \label{fig:Mollification}
\end{figure}
\begin{theorem}\label{thm:ConvergenceToLinesToSwitchPolygonalChain}
    Let $f$ be a polygonal chain as in Definition
    \ref{def:polygonalchain} which has already been extended
    (that is $f = \bar{f}$ in Definition \ref{def:polygonalchain}). Let
    $\varphi \in \DD_E(\R)$. The following statements hold.
    \begin{itemize}
        \item If $\ep \in (0,1)$ it holds that $\FF_{\ep}(k) = f(k)$
        for $k \in \{0,1,\dots,p\}$.
        \item Let $\ep \in (0,\frac{1}{2})$, $r \in \{1,2,\dots,p\}$ and define $V_r:= [r-1+\ep,r-\ep]$.
        Then it holds that $ f|_{V_r} = F_{\ep}|_{V_r} = \FF_{\ep}|_{V_r}$.
    \end{itemize}
\end{theorem}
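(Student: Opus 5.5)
Both statements reduce to the pointwise identity
\[
\FF_{\ep}(x)-f(x)=\int_{-\ep}^{\ep}\bigl(f(x-u)+u\,Df(x-u)-f(x)\bigr)\varphi_{\ep}(u)\,du,
\]
which follows from Definition \ref{def:NewMollification}, Lemma \ref{lem:DirectionalDerivativeAndNormalDerivative} (so that $f^{\circ}(s)(x-s)=(x-s)Df(s)$ for almost every $s$), the substitution $u=x-s$, and $\int\varphi_{\ep}=1$. We work componentwise; since $\supp\varphi_{\ep}=[-\ep,\ep]$, only the behaviour of $f$ on $[x-\ep,x+\ep]$ matters.

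\textbf{Vertex preservation.} Fix $k\in\{0,\dots,p\}$ and $\ep\in(0,1)$. On $[k-\ep,k+\ep]\subset(k-1,k+1)$ the extended chain $\bar f$ is affine on each side of $k$, possibly with a kink at $k$. For $k=0$ or $k=p$ the extension is affine on the whole interval, which is the main reason for the specific extension in \eqref{eq:polygonalchainextended}. Write $f(s)=f(k)+a_-(s-k)$ for $s\le k$ and $f(s)=f(k)+a_+(s-k)$ for $s\ge k$, with derivative $a_\mp$ a.e. on each side.

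The key observation is that for $s$ on either side,
\[
f(s)+(k-s)Df(s)=f(k)+a_\pm(s-k)+(k-s)a_\pm=f(k).
\]
That is, the first-order Taylor extrapolation from any point of an affine piece back to $k$ lands exactly on $f(k)$. Hence the integrand in the identity above vanishes identically for $x=k$, giving $\FF_{\ep}(k)=f(k)$. Evenness of $\varphi$ is not even needed here, and any $\ep<1$ works.

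\textbf{Middle sets $V_r$.} Fix $\ep\in(0,\tfrac12)$ and $x\in V_r=[r-1+\ep,r-\ep]$. Then $[x-\ep,x+\ep]\subset[r-1,r]$, on which $f$ coincides with a single affine function $g(t)=P_{r-1}+(P_r-P_{r-1})(t-(r-1))$. Because the convolutions only see $f$ on that window, $F_{\ep}(x)=(g*\varphi_{\ep})(x)$ and $\FF_{\ep}(x)=\widehat{G}_{\ep}(x)$, where $\widehat{G}_{\ep}$ is the directional mollification of $g$ (the a.e.\ derivatives agree on the window).

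Since $\varphi$ is even, item 3 of Theorem \ref{thm:OurReferenceTheorem} gives $g*\varphi_{\ep}=g$, and the last item of Theorem \ref{thm:TheoremThatSolvesNewRegularization} gives $\widehat{G}_{\ep}=g$. Alternatively, directly: $D_{\ep}(x)=(P_r-P_{r-1})\int u\varphi_{\ep}(u)\,du=0$ by oddness. Therefore $f=F_{\ep}=\FF_{\ep}$ on $V_r$. The condition $\ep<\tfrac12$ only ensures that $V_r$ is nonempty.

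\textbf{Main obstacle.} The only delicate points are bookkeeping. One is the localization argument: replacing $f$ by $g$ inside the convolution requires that the derivative is only defined a.e., with the kink points forming a null set. The other is handling the endpoints $k=0,p$ through the extension. The cancellation identity itself is immediate.
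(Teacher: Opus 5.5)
Your proposal is correct and is essentially the paper's proof. The vertex identity rests on the same observation that $f(s)+(k-s)Df(s)=f(k)$ on each affine piece adjacent to $k$; the paper simply writes this out by splitting the integral at $k$. The identity on $V_r$ uses the same localization to a single segment, followed by the affine-invariance statements of Theorems \ref{thm:OurReferenceTheorem} and \ref{thm:TheoremThatSolvesNewRegularization}, and your side remarks (evenness unused in the first part, endpoints $k=0,p$ handled by the affine extension) are accurate.
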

\begin{proof}
    For the first statement, let $\ep \in (0,1)$ and
    $k \in \{1,2,\dots,p-1\}$. Then, by
    noting $\int_{(k-\ep,k+\ep)}\varphi_{\ep}(k-s)ds = 1$,
    \begin{align*}
        \FF_{\ep}(k) &=
        \int_{[k-\ep,k+\ep]}(f(s)+(k-s)Df(s))\varphi_{\ep}(k-s)ds \\
        &= \int_{[k-\ep,k]}(P_{k-1}+(P_{k}-P_{k-1})(s-(k-1)) + (k-s)(P_{k}-P_{k-1}))\varphi_{\ep}(k-s)ds \\
        &+\int_{[k,k+\ep]}(P_{k}+(P_{k+1}-P_{k})(s-k) + (k-s)(P_{k+1}-P_{k}))\varphi_{\ep}(k-s)ds \\
        &= P_{k}\left(\int_{k-\ep}^{k}\varphi_{\ep}(k-s)ds +
                     \int_{k}^{k+\ep}\varphi_{\ep}(k-s)ds\right) \\
        &= P_k \int_{[k-\ep,k+\ep]}\varphi_{\ep}(k-s)ds = P_k = f(k).
    \end{align*}
    The results for $k \in \{0,p\}$ follow in the same manner because the
    function has been extended.

    For the second statement, let the assumptions hold. Fix $r \in
    \{1,\dots,p\}$ and consider $V_r$. Take $t \in V_r$ and note $F_{\ep}(t) =
    \int_{[-\ep,\ep]}f(t-s)\varphi_{\ep}(s)ds$ which implies that $r-1 \leq
    t-\ep \leq t-s \leq t+\ep  \leq r$, for all $ s \in [-\ep,\ep]$. Thus, for
    any $t \in V_r$ the function to be mollified is the line $f(t) =
    P_{r-1}+(P_{r}-P_{r-1})(t-(r-1))$ by construction. By Theorem
    \ref{thm:OurReferenceTheorem} and Theorem
    \ref{thm:TheoremThatSolvesNewRegularization} we obtain the desired result.
\end{proof}
\begin{remark}\label{rmk:VersatileApproach}
    The previous theorem has several applications. Indeed, if $\ep
    \in (0,1)$, it states that the original function and its conventional and
    directional mollifications coincide in all segments in a set centered in
    each of them. This implies that, outside neighborhoods containing the
    non-differentiable vertices, the conventional and directional mollifications
    coincide with the curve. Moreover,  since both mollifications are exactly
    equal in those sets, a switch between them can be made \textit{without any
    discontinuity in the curve nor any of its derivatives}. In fact, we obtain a
    $C^{\infty}$ curve, being the switch of two $C^{\infty}$ curves in a set
    where they are equal. See Figure \ref{fig:Mollification} for a
    representation of the sets $V_r$, as well as each component of $f$ and its
    conventional and directional mollifications.
\end{remark}

\subsection{Local convexity preservation and length increase under directional mollification for polygonal chains}\label{subsec:LocalConvexityLengthPolygonalChains}

We will now consider each component of polygonal chain functions in $\R^n$, $f =
(f_1,,\dots,f_n)$ as in \eqref{eq:polygonalchain} to obtain convexity-like
results. Recall from Theorem \ref{thm:OurReferenceTheorem} that, if the original
real-valued function is convex, then so is its conventional mollification. This
is not always the case in the directional mollification as discussed in Section
\ref{subsec:ConvexityNotAlwaysTrue}. Nevertheless, as shown below, since each
component of $f$ is either locally convex, concave, or both, the directional
mollification of any component of $f$ preserves its convexity (resp. concavity)
in a neighborhood of its local minimum (resp. maximum). Clearly, as shown in
Theorem \ref{thm:TheoremThatSolvesNewRegularization}, if one of the components
is a straight line, then so is its directional mollification.

Since $\FF_{\ep} = (f_i*\varphi_{\ep})_{i=1}^{n} + (Df_i * \id
\varphi_{\ep})_{i=1}^{n}$, we are going to restrict ourselves to any component
$f_i$ of $f$, for $i \in \{1,\dots,n\}$. It is clear that, if we restrict $f_i$
for some $i$ to the set $[0,2]$ it has the form
\begin{equation}\label{eq:RealLineCase}
    t \in [0,2] \mapsto f_i|_{[0,2]}(t) = \begin{cases}
        y_0 + (y_1-y_0)t, & 0 \leq t \leq 1 \\
        y_1 + (y_2-y_1)(t-1), & 1 \leq t \leq 2
    \end{cases},
\end{equation}
with $y_0, y_1,y _2 \in \R$. Observe there is no loss of generality if $f$ were
to be defined on the set $[a,b]$ with $-\infty < a < b < \infty$, with midpoint
$(a+b)/2$. The purpose to restrict the component to $[0,2]$, is due to the fact
that if $\ep \in (0,1)$, only two segments are considered in the convolution at
the vertices of the polygonal chain.
\begin{lemma}\label{lem:ConvexityIsPreservedNearTheMinimum}
    Let $f_i|_{[0,2]}$ be as in \eqref{eq:RealLineCase}, and define $g :=
    f_i|_{[0,2]}$ in order to ease the notation. Consider its extension
    $\bar{g}$ (as done in \eqref{eq:polygonalchainextended}). Suppose that $y_0
    > y_1$ and $y_1 < y_2$ (i.e., $g$ is convex). Let $\varphi \in \DD_E(\R)$.
    Then, the directional mollification of $\bar{g}$, denoted as $\GG_{\ep} =
    G_{\ep} + D_{\ep} = \bar{g} * \varphi_{\ep} + (D\bar{g} * \id
    \varphi_{\ep})$ for $\ep > 0$ satisfies the following properties:
    \begin{itemize}
        \item The directional derivative term satisfies
        $D_{\ep}'(1) = 0$ and $D_{\ep}''(1) > 0$, i.e., it has a minimum at
        $t = 1$.
        \item For each $\ep > 0$ exists a $\delta = \delta(\ep) > 0$,
        such that if $V:=(1-\delta,1+\delta)$ then $\GG_{\ep}|_{V}$ is
        convex and $\GG_{\ep}|_{V} \leq g|_{V}$.
    \end{itemize}
\end{lemma}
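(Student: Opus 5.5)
The plan is to compute $D_{\ep}$ and $\GG_{\ep}''$ in closed form near $t=1$, reducing everything to the behaviour of $\varphi_{\ep}$ near the origin. Write $a := y_1-y_0<0$ and $b := y_2-y_1>0$. Then $D\bar g = a$ a.e. on $(-\infty,1)$ and $D\bar g = b$ a.e. on $(1,\infty)$. By Lemma \ref{lem:DirectionalDerivativeAndNormalDerivative},
\begin{equation*}
D_{\ep}(t)=\int_{\R} D\bar g(s)(t-s)\varphi_{\ep}(t-s)\,ds .
\end{equation*}
With the substitution $u=t-s$ (so $s<1 \iff u>t-1$) this becomes
\begin{equation*}
D_{\ep}(t)=a\int_{t-1}^{\infty}u\varphi_{\ep}(u)\,du+b\int_{-\infty}^{t-1}u\varphi_{\ep}(u)\,du .
\end{equation*}
Since $\varphi$ is even, $\int_{\R}u\varphi_{\ep}(u)\,du=0$. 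Setting $\Psi(x):=\int_x^{\infty}u\varphi_{\ep}(u)\,du$, the second integral equals $-\Psi(t-1)$, and so
\begin{equation*}
D_{\ep}(t)=(a-b)\,\Psi(t-1).
\end{equation*}

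For the first bullet, differentiate this identity. We have $\Psi'(x)=-x\varphi_{\ep}(x)$, hence $D_{\ep}'(1)=(a-b)\Psi'(0)=0$. Next, $\Psi''(x)=-\varphi_{\ep}(x)-x\varphi_{\ep}'(x)$, so
\begin{equation*}
D_{\ep}''(t)=(b-a)\bigl(\varphi_{\ep}(t-1)+(t-1)\varphi_{\ep}'(t-1)\bigr),\qquad D_{\ep}''(1)=(b-a)\varphi_{\ep}(0).
\end{equation*}
Because $b-a>0$, this is positive provided $\varphi_{\ep}(0)>0$. Independently of second derivatives, $\Psi'(x)=-x\varphi_{\ep}(x)$ is $\ge 0$ for $x<0$ and $\le 0$ for $x>0$. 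Thus $\Psi$ attains its maximum at $0$, and since $a-b<0$, $D_{\ep}$ attains its (global) minimum at $t=1$.

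For the second bullet, the inequality $\GG_{\ep}\le \bar g$ holds on all of $\R$ by Theorem \ref{thm:TheoremThatSolvesNewRegularization}(3), because $\bar g$ is globally convex (it is the maximum of the two extended lines). For local convexity I would compute $G_{\ep}''$ directly. Distributionally $\bar g''=(b-a)\delta_1$, so $G_{\ep}''(t)=(b-a)\varphi_{\ep}(t-1)$; this also follows from Theorem \ref{thm:PropertiesOfMollifying} by differentiating $D\bar g*\varphi_{\ep}$. Adding the two contributions gives
\begin{equation*}
\GG_{\ep}''(t)=(b-a)\bigl(2\varphi_{\ep}(t-1)+(t-1)\varphi_{\ep}'(t-1)\bigr),
\end{equation*}
which equals $2(b-a)\varphi_{\ep}(0)>0$ at $t=1$. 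Since $\GG_{\ep}\in C^\infty$ by Theorem \ref{thm:TheoremThatSolvesNewRegularization}, continuity yields $\delta(\ep)>0$ with $\GG_{\ep}''>0$ on $(1-\delta,1+\delta)$. Hence $\GG_{\ep}|_V$ is convex.

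The main obstacle is the strict positivity $\varphi_{\ep}(0)>0$. Membership in $\DD_E(\R)$ (even, nonnegative, support $[-1,1]$) does not by itself force $\varphi(0)\neq 0$. I would handle this by noting that it holds for the mollifier of Example \ref{example:OurMollifier}, and for any mollifier with $\varphi(0)>0$ (for instance even mollifiers nonincreasing on $[0,1]$). Otherwise I would add this as an explicit hypothesis. Without it, one still keeps $D_{\ep}'(1)=0$ together with the global-minimum conclusion from the sign of $\Psi'$, but not the strict second-derivative statement.
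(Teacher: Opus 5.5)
Your argument is correct, and in substance it is the paper's. The paper also evaluates $D_{\ep}''(1)=(y_0+y_2-2y_1)\varphi_{\ep}(0)$, by integrating $(\id\varphi_{\ep})''$ over $[-\ep,0]$ and $[0,\ep]$, and then uses continuity together with Theorem~\ref{thm:TheoremThatSolvesNewRegularization}(3). For convexity it adds the locally convex $D_{\ep}$ to $G_{\ep}$, which is convex by Theorem~\ref{thm:OurReferenceTheorem}; you instead compute $\GG_{\ep}''$ directly, which is the scalar form of \eqref{eq:FormulaDerivativeSimple}. Your closed form $D_{\ep}=(a-b)\Psi(\cdot-1)$ does give something extra. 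Since it uses only evenness and $\varphi\geq 0$, $t=1$ is a \emph{global} minimum of $D_{\ep}$ for every $\varphi\in\DD_E(\R)$. The paper's second-derivative test cannot give this.

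Your caveat is also correct, and it exposes a gap in the paper's proof rather than in yours. The paper infers strict positivity ``because \dots\ $\varphi\geq 0$'', which only gives $\varphi_{\ep}(0)\geq 0$. For instance, $\varphi(x)=c\,x^2\exp(-1/(1-x^2))\ind_{(-1,1)}(x)$ belongs to $\DD_E(\R)$ and has $\varphi(0)=0$, so $D_{\ep}''(1)=\GG_{\ep}''(1)=0$.

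The second bullet can fail too. Note that $2\varphi(x)+x\varphi'(x)=(x^2\varphi(x))'/x$. Take $\varphi(x)=c\,e^{-1/x^2}\sin^2(1/x)\exp(-1/(1-x^2))$ for $0<|x|<1$ and $\varphi=0$ otherwise; this is again in $\DD_E(\R)$. Then $x^2\varphi$ vanishes at every $1/(k\pi)$ and is positive in between, so $\GG_{\ep}''$ is negative at points arbitrarily close to $t=1$. Hence $\GG_{\ep}$ is convex on no neighborhood of $1$.

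So adding $\varphi(0)>0$ as an explicit hypothesis, as you propose, is the right repair and not an artifact of your method. It holds for the mollifier of Example~\ref{example:OurMollifier}.
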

\begin{proof}
    We prove each statement separately.
    \begin{itemize}
        \item Firstly observe that $\bar{g}$ is continuous and
        differentiable almost everywhere,
        with derivative $D\bar{g}$ which is also locally integrable.
        Thus it holds by Theorem \ref{thm:ConvexityProperties}
        and Lemma \ref{lem:DirectionalDerivativeAndNormalDerivative} that
        $D_{\ep}' = D\bar{g} * (\id\varphi_{\ep})'$ and
        $D_{\ep}'' = D\bar{g} * (\id\varphi_{\ep})''$.
        We proceed with the second derivative.
        \begin{align*}
            D_{\ep}''(1) &=
            \int_{[-\ep,\ep]}D\bar{g}(1-s)(\id \varphi_{\ep})''(s)ds \\
            &=(y_1-y_0)\int_{[0,\ep]}(\id \varphi_{\ep})''(s)ds
            +
            (y_2-y_1)\int_{[-\ep,0]}(\id \varphi_{\ep})''(s)ds \\
            &=(y_1-y_0)(\varphi_{\ep} + \id \varphi_{\ep}')\Big|_{0}^{\ep}
            + (y_2-y_1)(\varphi_{\ep} + \id \varphi_{\ep}')\Big|_{-\ep}^{0}
            \\
            &=\varphi_{\ep}(0)(y_2+y_0-2y_1),
        \end{align*}
        which is strictly positive because $y_0 > y_1$, $y_2 > y_1$ and
        $\varphi \geq 0$.  In the same fashion we can show that $D_{\ep}'(1)
        = 0$, which implies that $t=1$ is a local minimum of $D_{\ep}$.
        \item By the previous statement, $D_{\ep}''(1) > 0$ for all $\ep >
        0$.  Fixed $\ep > 0$, we know that $D_{\ep} \in
        C^{\infty}(\R,\R)$ by
        Theorem \ref{thm:TheoremThatSolvesNewRegularization}. Therefore, there
        exists a $\eta = \eta(\ep) > 0$ such that $D_{\ep}''(t) > 0$
        for all $t \in (1-\eta,1+\eta)$. Take $\delta = \min\{1,\eta\}$ and
        define $V:= (1-\delta,1+\delta)$. This implies that
        $D_{\ep}|_{V}$ is convex, which in turn makes $\GG_{\ep}|_{V}$
        convex, since by Theorem \ref{thm:OurReferenceTheorem},
        $G_{\ep}|_{V}$ is also convex because $g$ is. By
        Theorem \ref{thm:TheoremThatSolvesNewRegularization} we also have that
        $\GG_{\ep}|_{V} \leq g|_{V}$ because $g|_{V}$ is locally convex.
    \end{itemize}
    Clearly, a natural analogous result holds if $g$ is concave. I.e., $t=1$ is a maximum
    for $D_{\ep}$ and $\GG_{\ep}$ is locally concave near $t=1$. Moreover, if $\ep \in (0,1)$,
    it is then immediate to extend the previous lemma to any component defined in $[0,p]$, since
    we need to take into account neighboring segments of each vertex.
\end{proof}

The preceding lemma is what justifies the shape of $\FF_{\ep}$ in Figure
\ref{fig:Mollification}, see also Figure \ref{fig:MonotonicityNotPreserved} for
a function with the same geometric properties as in \eqref{eq:RealLineCase} but
defined in $[-1,1]$ with midpoint $0$.

We are now interested in discussing the length of the directionally mollified
curve. It seems that the length of the latter is always larger than the former,
as seen in Figures \ref{fig:Figure_1_Example} and \ref{fig:Mollification}.
However, as discussed in Section \ref{subsec:RemarkAboutLength}, this is not
always the case, but it is true if we restrict ourselves to the polygonal chain
curve.

\begin{proposition}\label{prop:LengthAllSegments}
    Let $f$ be defined as in Definition \ref{def:polygonalchain} and $\bar{f}$
    be its extension. Let $\varphi \in \DD_E(\R)$, $\FF_{\ep} : \R \to \R^n$ be
    the directional mollification of $\bar{f}$ and $F_{\ep} : \R \to \R^n$ be
    the conventional mollification of $\bar{f}$. If $\ep \in (0,1)$ then
    $\LL(F_{\ep}|_{[0,p]}) \leq  \LL(f) \leq \LL(\FF_{\ep}|_{[0,p]}),$ where the
    length of the curve is computed with respect to the $\ell_2$ norm.
\end{proposition}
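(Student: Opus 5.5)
The first inequality, $\LL(F_{\ep}|_{[0,p]}) \leq \LL(f)$, follows from the length statement of Theorem \ref{thm:OurReferenceTheorem}. The only adjustment is to apply it to the extended chain $\bar{f}$ restricted to a slightly larger interval. For $\ep<1$ and $t\in[0,p]$, the value $F_{\ep}(t)$ depends only on $\bar f$ on $[-\ep,p+\ep]$. On $[-\ep,0]$ and $[p,p+\ep]$ the extension is affine, so $F_\ep=\bar f$ there by Theorem \ref{thm:OurReferenceTheorem}. Hence $\LL(F_\ep|_{[-\ep,p+\ep]})\le\LL(\bar f|_{[-\ep,p+\ep]})$. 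Subtracting the two equal end pieces, which have equal lengths because $F_\ep=\bar f$ on them, gives $\LL(F_{\ep}|_{[0,p]})\le \LL(f)$.

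For the second inequality I would argue vertex by vertex, using the decomposition of $[0,p]$ from Theorem \ref{thm:ConvergenceToLinesToSwitchPolygonalChain}. By that theorem, $\FF_\ep=f$ on each $V_r=[r-1+\ep,r-\ep]$ (when $\ep<1/2$), and $\FF_\ep(k)=P_k$ at every vertex. Without restricting $\ep$, I would use only the interpolation $\FF_\ep(k)=P_k$ for $k\in\{0,\dots,p\}$, which holds for all $\ep\in(0,1)$. Length is the supremum over partitions, so taking the partition $\{0,1,\dots,p\}$ gives
\begin{equation*}
\LL(\FF_\ep|_{[0,p]})\ \ge\ \sum_{k=1}^{p}\|\FF_\ep(k)-\FF_\ep(k-1)\|_2=\sum_{k=1}^p\|P_k-P_{k-1}\|_2=\LL(f).
\end{equation*}
The last equality holds because the length of a polygonal chain is exactly the sum of its segment lengths, since each segment is affine. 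This already proves the claimed inequality; the $\ell_2$ norm plays no special role at this step.

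The main point to verify carefully is that the vertex interpolation of Theorem \ref{thm:ConvergenceToLinesToSwitchPolygonalChain} holds at the endpoints $k=0$ and $k=p$ for the extended chain. There the convolution window $[k-\ep,k+\ep]$ straddles the artificial extension. Since the extension on $t\le1$ and $t\ge p-1$ is the continuation of the first and last segments, $\bar f$ is affine on $[-\ep,\ep]$ and on $[p-\ep,p+\ep]$ when $\ep<1$. So $\FF_\ep(0)=P_0$ and $\FF_\ep(p)=P_p$ follow from the affine-invariance statement of Theorem \ref{thm:TheoremThatSolvesNewRegularization}. The table also asserts \emph{strict} increase. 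If that is wanted, I would add the following: at an interior vertex with $P_{k+1}-P_k$ not parallel to $P_k-P_{k-1}$, the curve $\FF_\ep$ on $[k-1,k]$ is not a straight segment, because $\FF_\ep'(k)\neq P_k-P_{k-1}$ by Lemma \ref{lem:DirectionalDerivativeAndNormalDerivative}. Its length therefore strictly exceeds $\|P_k-P_{k-1}\|_2$, which is where strict convexity of the $\ell_2$ norm would be used.
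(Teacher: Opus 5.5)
Your second inequality, $\LL(f)\le\LL(\FF_\ep|_{[0,p]})$, is correct and is the paper's argument. Both rest on $\FF_\ep(k)=P_k$ for all $k\in\{0,\dots,p\}$ (Theorem \ref{thm:ConvergenceToLinesToSwitchPolygonalChain}) and on the fact that a chord is never longer than the arc joining its endpoints. The paper compares each piece $\FF_\ep|_{[i,i+1]}$ with the straight segment and then uses additivity of length. Inserting the single partition $\{0,1,\dots,p\}$ into the supremum, as you do, gets there in one line without that additivity result. Your check of the endpoints $k=0,p$ via local affine invariance is also right.

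The gap is in the first inequality, which the paper dispatches by citing Theorem \ref{thm:OurReferenceTheorem}. Your ``adjustment'' is meant to deal with the boundary, but the step ``Hence $\LL(F_\ep|_{[-\ep,p+\ep]})\le\LL(\bar f|_{[-\ep,p+\ep]})$'' does not follow from what precedes it. The values of $F_\ep$ on $[-\ep,p+\ep]$ depend on $\bar f$ on $[-2\ep,p+2\ep]$, so enlarging the interval only moves the same boundary question outward by $\ep$. There are two cases:
\begin{itemize}
\item If the cited bound controlled the mollification of $\bar f$ on an interval by the length of $\bar f$ on that same interval, you could apply it on $[0,p]$ directly, and the detour would be unnecessary.
\item If instead it concerns a curve on $[a,b]$ extended to $\R$ by some other rule, as the finite-valued $\LL:C(\R,\R^n)\to\R$ in that theorem suggests, then the curve it controls agrees with $F_\ep$ on $[0,p]$ but not on $[-\ep,0]$ and $[p,p+\ep]$. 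Your subtraction of ``equal end pieces'' is then unjustified.
\end{itemize}
A local version $\LL((g*\varphi_\ep)|_{[a,b]})\le\LL(g|_{[a,b]})$ is false in general: take $g$ constant on $[a,b]$ but moving just outside it.

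A self-contained repair uses the evenness of $\varphi$ directly. Take a partition $0=t_0<\dots<t_m=p$. Since $\varphi_\ep\ge0$ and $\int\varphi_\ep=1$,
\begin{equation*}
\sum_{i=1}^{m}\|F_\ep(t_i)-F_\ep(t_{i-1})\|_2\le\int_{-\ep}^{\ep}\sum_{i=1}^{m}\|\bar f(t_i-s)-\bar f(t_{i-1}-s)\|_2\,\varphi_\ep(s)\,ds\le\int_{-\ep}^{\ep}\LL(\bar f|_{[-s,p-s]})\,\varphi_\ep(s)\,ds .
\end{equation*}
For $|s|<1$, the shifted interval $[-s,p-s]$ gains a piece of length $|s|$ of one end segment (or its extension) and loses a piece of the other. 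Hence $\LL(\bar f|_{[-s,p-s]})=\LL(f)+s(\|\PP_1\|_2-\|\PP_p\|_2)$, with $\PP_r=P_r-P_{r-1}$. Because $\int s\,\varphi_\ep(s)\,ds=0$, the right-hand side equals $\LL(f)$. Taking the supremum over partitions gives $\LL(F_\ep|_{[0,p]})\le\LL(f)$, which also makes explicit what the paper's one-line citation leaves implicit.

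On your strictness remark: the criterion should be that $\FF_\ep'(k)=\frac{1}{2}(\PP_k+\PP_{k+1})$ is not \emph{parallel} to $\PP_k$. Merely being unequal to $\PP_k$ does not rule out a reparametrized segment. This formula comes from \eqref{eq:FormulaDerivativeSimple} shifted to vertex $k$, not from Lemma \ref{lem:DirectionalDerivativeAndNormalDerivative}.
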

\begin{proof}
    Since $\ep \in (0,1)$ we know by Theorem
    \ref{thm:ConvergenceToLinesToSwitchPolygonalChain} that for any $k \in
    \{0,\dots,p\}$ it holds that $\FF_{\ep}(k) = f(k) = \bar{f}(k)$. It follows
    immediately that $\LL(\bar{f}|_{[i,i+1]}) \leq \LL(\FF_{\ep}|_{[i,i+1]})$
    for $i \in \{0, \dots, p-1\}$. Indeed, recall the fact that the geodesic
    between two points in $\R^n$ is a straight line. This proves that
    $\LL(\bar{f}|_{[0,p]}) = \sum_{i=1}^{p-1}\LL(\bar{f}|_{[i,i+1]}) \leq
    \sum_{i=1}^{p-1}\LL(\FF_{\ep}|_{[i,i+1]}) = \LL(\FF_{\ep}|_{[0,p]})$. The
    equality $ \sum_{i=1}^{p-1}\LL(g|_{[i,i+1]}) = \LL(g|_{[0,p]})$ for any
    continuous $g : [0,p] \to \R^n$ can be found in \cite[Proposition
    2.3.4]{burago2001course}. Finally, the fact that $\LL(F_{\ep}|_{[0,p]}) \leq
    \LL(f)$ comes directly from Theorem \ref{thm:OurReferenceTheorem}.
\end{proof}

\subsection{Curvature upper bounds for directionally mollified polygonal chains}
We proceed now to obtain an exact formula as well as an upper bound on the
curvature for the directionally mollified polygonal chain. Having an upper bound
on the curvature related to the parameter $\ep > 0$ plays a crucial role in
geometric design, path planning, industrial robotics or CNC machining. Indeed,
when some system limits the maximum curvature dynamics, by choosing an
appropriate $\ep >0$ we can ensure that the curve fits its dynamics, while still
having a smooth curve with all the aforementioned properties of Sections
\ref{sec:Mollifiers}, \ref{sec:NewMollification},
\ref{subsec:VertexPreservation} and
\ref{subsec:LocalConvexityLengthPolygonalChains}.
\subsubsection{The three vertices case}
We restrict ourselves first to the three-vertices case, for which the following proposition presents such an
analytical, easy-to-compute, upper bound.
\begin{proposition}\label{prop:CurvatureBoundingOutside}
    Let $f$ be the curve defined in
    Definition \ref{def:polygonalchain} with $p=2$, and let
    $\bar{f}$ be its extension. Let $\varphi \in \DD_E(\R)$, and
    let $\ep > 0$. Define $\bar{s} :=
    \frac{\dotProduct{\PP_2-\PP_1}{\PP_2}}{\|\PP_2-\PP_1\|_2^2}$, where $\PP_2 :=
    P_2-P_1$, $\PP_1 := P_1-P_0$ and suppose that
    \begin{equation*}
        0 \notin \{s\PP_1 + (1-s)\PP_2 \mid s \in \R\}.
    \end{equation*}
    We denote as $\FF_{\ep}$ the directional mollification of $\bar{f}$, and as $\kappa : \R \to \R$ its curvature, whose formula in absolute value reads
    \begin{equation}\label{eq:Formulakappa}
        t\in \R \mapsto |\kappa(t)| := \frac{\|\FF_{\ep}''(t)\wedge \FF_{\ep}'(t)\|_2}{\|\FF_{\ep}'(t)\|_2^3}, \quad \FF_{\ep}(t) \neq 0.
    \end{equation}
    The following statements are true:
    \begin{enumerate}
        \item The curve is regular in the sense that $\FF_{\ep}' \neq 0$. That is, the curvature $\kappa$ is well defined on $\R$. Moreover, it can be computed as
        \begin{align}\label{eq:CurvatureCompleteOutsideMollifier}
            &|\kappa(t)| = |2\varphi_{\ep}(t-1)+(t-1)\varphi_{\ep}'(t-1)|\frac{\|\PP_2 \wedge
                \PP_1\|_2}{L(t)}, \quad \forall t \in
            \R,
        \end{align}
        where, for any $t \in \R$,
        \begin{align*}
            L(t)&:=\|\PP_1(A_1(t)-(t-1)\varphi_{\ep}(t-1))
        +\PP_2(A_2(t)+(t-1)\varphi_{\ep}(t-1))\|_2^3, \\
            A_1(t)&: = \int_{(-\infty,1]}\varphi_{\ep}(t-s)ds, \\
            A_2(t)&:= \int_{[1,\infty)}\varphi_{\ep}(t-s)ds.
        \end{align*}
        \item In addition, an upper bound can be obtained as
    \begin{equation}\label{eq:CurvatureBoundedOutsideMollifier}
        \begin{split}
        |\kappa(t)| &\leq \left(\frac{2}{\ep}\|\varphi\|_{\infty}+\frac{1}{\ep^2}\|\varphi'\|_{\infty}\right)\frac{\|\PP_2\wedge
            \PP_1\|_2}{\|\bar{s}\PP_1+(1-\bar{s})\PP_2\|_2^3} \\
            &\leq \max\left\{\frac{1}{\ep},\frac{1}{\ep^2}\right\}\left(2\|\varphi\|_{\infty}+\|\varphi'\|_{\infty}\right)\frac{\|\PP_2\wedge\PP_1\|_2}{\|\bar{s}\PP_1
        + (1-\bar{s})\PP_2\|_2^3}\quad \forall
        t \in \R.
        \end{split}
    \end{equation}
    \end{enumerate}
\end{proposition}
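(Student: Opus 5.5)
The plan is to differentiate $\FF_{\ep}$ explicitly and notice that $\FF_{\ep}'(t)$ always lies on the affine line $\{s\PP_1+(1-s)\PP_2 \mid s\in\R\}$. Since $\varphi$ is even, $\varphi'$ is odd. By Lemma \ref{lem:DirectionalDerivativeAndNormalDerivative} and Theorem \ref{thm:SolvReg1} we have $\FF_{\ep}=\bar f*\varphi_{\ep}+D\bar f*\id\varphi_{\ep}$, where $D\bar f=\PP_1$ on $(-\infty,1)$ and $D\bar f=\PP_2$ on $(1,\infty)$. Using $(\bar f*\varphi_{\ep})'=D\bar f*\varphi_{\ep}$ and $(\id\varphi_{\ep})'=\varphi_{\ep}+\id\varphi_{\ep}'$, I would write
\begin{equation*}
\FF_{\ep}'(t)=\int_{\R}D\bar f(s)\bigl(2\varphi_{\ep}(t-s)+(t-s)\varphi_{\ep}'(t-s)\bigr)ds .
\end{equation*}
I would then split the integral at $s=1$ and substitute $u=t-s$. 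Integrating $u\varphi_{\ep}'(u)$ by parts gives $\int_{t-1}^{\infty}u\varphi_{\ep}'(u)\,du=-(t-1)\varphi_{\ep}(t-1)-A_1(t)$ and $\int_{-\infty}^{t-1}u\varphi_{\ep}'(u)\,du=(t-1)\varphi_{\ep}(t-1)-A_2(t)$. This yields
\begin{equation*}
\FF_{\ep}'(t)=a(t)\PP_1+b(t)\PP_2,\qquad a(t)=A_1(t)-(t-1)\varphi_{\ep}(t-1),\quad b(t)=A_2(t)+(t-1)\varphi_{\ep}(t-1).
\end{equation*}
The key observation is that $a+b=A_1+A_2=\int\varphi_{\ep}=1$.

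Differentiating once more, with $A_1'(t)=-\varphi_{\ep}(t-1)$ and $A_2'(t)=\varphi_{\ep}(t-1)$, gives
\begin{equation*}
\FF_{\ep}''(t)=\bigl(2\varphi_{\ep}(t-1)+(t-1)\varphi_{\ep}'(t-1)\bigr)(\PP_2-\PP_1).
\end{equation*}
Hence
\begin{equation*}
\FF_{\ep}''\wedge\FF_{\ep}'=\bigl(2\varphi_{\ep}(t-1)+(t-1)\varphi_{\ep}'(t-1)\bigr)\,(a+b)\,\PP_2\wedge\PP_1,
\end{equation*}
where $a+b=1$ and $\PP_i\wedge\PP_i=0$ were used. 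Plugging this into \eqref{eq:Formulakappa} gives \eqref{eq:CurvatureCompleteOutsideMollifier}, with $L(t)=\|\FF_{\ep}'(t)\|_2^3$. Regularity follows directly: $\FF_{\ep}'(t)=a\PP_1+(1-a)\PP_2$ lies on the line that, by hypothesis, misses the origin.

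For the bound I would first bound the denominator from below. The value $\|s\PP_1+(1-s)\PP_2\|_2^2$ is a quadratic in $s$, and it is minimized at the orthogonal projection of the origin onto the line. Solving $\langle \PP_2+s(\PP_1-\PP_2),\PP_1-\PP_2\rangle=0$ gives exactly $s=\bar s$. So $L(t)\ge\|\bar s\PP_1+(1-\bar s)\PP_2\|_2^3>0$ for every $t$.

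For the numerator, I use $\varphi_{\ep}(u)=\varphi(u/\ep)/\ep$ and $\varphi_{\ep}'(u)=\varphi'(u/\ep)/\ep^2$, together with the fact that both vanish for $|u|>\ep$. These give $|2\varphi_{\ep}(u)|\le 2\|\varphi\|_\infty/\ep$ and $|u\varphi_{\ep}'(u)|\le|u|\,\|\varphi'\|_\infty/\ep^2$. Factoring out $\max\{1/\ep,1/\ep^2\}$ then produces the second line of \eqref{eq:CurvatureBoundedOutsideMollifier}.

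The main obstacle is the $\varphi'$ term in the first inequality, and I flag it as unresolved rather than proved. To get $\|\varphi'\|_\infty/\ep^2$ one needs $|u|\le1$, but on the support one only has $|u|\le\ep$. That gives $|u\varphi_{\ep}'(u)|\le\|\varphi'\|_\infty/\ep$, which is at most $\|\varphi'\|_\infty/\ep^2$ only when $\ep\le1$. So the argument above establishes the first line for $\ep\le1$ and the second line for all $\ep>0$. For $\ep>1$ the first line as written is not implied by this argument, and it should either be stated with $1/\ep$ in place of $1/\ep^2$ or restricted to $\ep\le1$.

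The remaining steps are routine bookkeeping. The integration by parts produces no boundary terms at $\pm\infty$ because $\varphi_{\ep}$ has compact support. The sign conventions for $A_1,A_2$ must be tracked carefully, since the wedge cancellation depends entirely on the identity $a+b=1$.
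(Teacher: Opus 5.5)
Your proof is correct and follows the paper's proof essentially step by step. The paper also obtains $\FF_{\ep}'(t)=a(t)\PP_1+b(t)\PP_2$ with $a+b=A_1+A_2=1$ and $\FF_{\ep}''(t)=\bigl(2\varphi_{\ep}(t-1)+(t-1)\varphi_{\ep}'(t-1)\bigr)(\PP_2-\PP_1)$. The only difference is bookkeeping: it quotes $F_{\ep}'$ and $F_{\ep}''$ from the earlier mollification paper and integrates by parts only in $D_{\ep}'$, while you integrate the combined kernel $2\varphi_{\ep}+\id\varphi_{\ep}'$ at once. It then minimizes $\|s\PP_1+(1-s)\PP_2\|_2$ over all $s\in\R$ and bounds the numerator in sup norm, as you do.

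The obstacle you flag is real, and it is a gap in the paper's own proof. The paper simply asserts $\frac{|t-1|}{\ep^2}|\varphi'((t-1)/\ep)|\le\frac{1}{\ep^2}\|\varphi'\|_{\infty}$, although $|t-1|$ ranges up to $\ep$ on the support.

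It is not just a limitation of your argument: the first line can actually fail for $\ep>1$.
Set $v=(t-1)/\ep$. Then $2\varphi_{\ep}(t-1)+(t-1)\varphi_{\ep}'(t-1)=\frac{1}{\ep}\bigl(2\varphi(v)+v\varphi'(v)\bigr)$, and $a=\int_v^{\infty}\varphi-v\varphi(v)$ does not depend on $\ep$.
Take $\varphi\in\DD_E(\R)$ with steep edges, e.g. a smoothed plateau, so that $|2\varphi(v^*)+v^*\varphi'(v^*)|>2\|\varphi\|_{\infty}$ at some $v^*$.
Choose $\PP_1,\PP_2$ so that $\bar s=a(v^*)$; then $|\kappa(1+\ep v^*)|$ exceeds the first line of \eqref{eq:CurvatureBoundedOutsideMollifier} for all large $\ep$.

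The same scaling shows that your replacement factor $\frac{1}{\ep}\bigl(2\|\varphi\|_{\infty}+\|\varphi'\|_{\infty}\bigr)$ is valid for every $\ep>0$. It is also sharper than the paper's first line whenever $\ep\le1$, so the $\ep^{-2}$ dependence is an artifact. The second line holds for all $\ep$, exactly as you concluded.
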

\begin{proof}
We know $\FF_{\ep} = F_{\ep} + D_{\ep}$ with $F_{\ep} = \bar{f} * \varphi_{\ep}$
and $D_{\ep} = D\bar{f} * \id \varphi_{\ep}$ by Lemma
\ref{lem:DirectionalDerivativeAndNormalDerivative}. Moreover, it is shown in
\cite{gonzalezcalvin2025efficientgenerationsmoothpaths} that for any $t \in \R$,
$F_{\ep}'(t) = \PP_1A_1(t) + \PP_2A_2(t)$ and $F_{\ep}''(t) =
\varphi_{\ep}(t-1)(\PP_2-\PP_1)$. Since the assumptions of Lemma
\ref{lem:DirectionalDerivativeAndNormalDerivative} or Theorem
\ref{thm:TheoremThatSolvesNewRegularization} are satisfied by $\bar{f}$ and
$D\bar{f}$,
\begin{align*}
    D_{\ep}'(t) = (D\bar{f} * \varphi_{\ep})(t) + (D\bar{f} * \id
    \varphi_{\ep}')(t)
    &= \int_{\R}D\bar{f}(s)(\varphi_{\ep}(t-s) +
    (t-s)\varphi_{\ep}'(t-s))ds \\
    &= -\int_{\R}D\bar{f}(s)\frac{\partial}{\partial
        s}\left[(t-s)\varphi_{\ep}(t-s)\right]ds \\
    &= -\PP_1((t-1)\varphi_{\ep}(t-1)) +
    \PP_2((t-1)\varphi_{\ep}(t-1))  \\
    &= (\PP_2-\PP_1)(t-1)\varphi_{\ep}(t-1),
\end{align*}
where in the third equality we used the fact that $\partial_s
(x-s)\varphi_{\ep}(x-s) = -\varphi_{\ep}(x-s) - (x-s)\varphi_{\ep}'(x-s)$, and
in the fourth equality the fact that $\varphi_{\ep}$ has compact support.
Therefore $D_{\ep}'(t) = (t-1)\varphi_{\ep}(t-1)(\PP_2-\PP_1)$ and $D_{\ep}''(t)
= (\varphi_{\ep}(t-1) + (t-1)\varphi_{\ep}'(t-1))(\PP_2-\PP_1)$. Summing up the
terms, this implies that
\begin{equation}\label{eq:FormulaDerivativeSimple}
    \begin{split}
    \FF_{\ep}'(t) &= \PP_1(A_1(t) - (t-1)\varphi_{\ep}(t-1))
    +
    \PP_2(A_2(t) + (t-1)\varphi_{\ep}(t-1)), \\
    \FF_{\ep}''(t) &= (\PP_2-\PP_1)(2\varphi_{\ep}(t-1)+(t-1)\varphi_{\ep}'(t-1)).
    \end{split}
\end{equation}
Note that the real coefficients that multiply each $\PP_i$ sum to one in
\eqref{eq:FormulaDerivativeSimple} because $A_1(t)+A_2(t) =1$ for any $t \in
\R$,  which implies that $\FF_{\ep}'(t) \neq 0$ for all $t \in \R$ by
assumption. Moreover,
\begin{equation}\label{eq:FormulaWedge}
    \FF_{\ep}''(t) \wedge \FF_{\ep}'(t) =
    (2\varphi_{\ep}(t-1)+(t-1)\varphi_{\ep}'(t-1))(\PP_2\wedge\PP_1).
\end{equation}
Substituting \eqref{eq:FormulaDerivativeSimple} and
\eqref{eq:FormulaWedge} into \eqref{eq:Formulakappa}
results in \eqref{eq:CurvatureCompleteOutsideMollifier}.

We now proceed to compute the upper bound. First observe that
\begin{equation}\label{eq:ConvexOptProblem}
    \|\FF_{\ep}'(t)\|_2 \geq \min_{s \in \R}\|s\PP_1 +
    (1-s)\PP_2\|_2
    =: \min_{s\in \R}g(s),
\end{equation}
and the fact that $\min_{s\in \R}g(s) > 0$ by assumption.
Since $g : \R \to \R$ is a continuously differentiable convex function
it is necessary and sufficient that $g'(\bar{s}) = 0$ for $\bar{s}$ to
be a global minimum. Noting that
\begin{equation*}
    g'(\bar{s})=\frac{2\dotProduct{\PP_1-\PP_2}{\bar{s}\PP_1+(1-\bar{s})\PP_2}}{\|\bar{s}\PP_1+(1-\bar{s})\PP_2\|_2}
    = 0 \iff \bar{s} = \frac{\dotProduct{\PP_2-\PP_1}{\PP_2}}{\|\PP_2-\PP_1\|_2^2},
\end{equation*}
proves that $\bar{s}$ is a global minimum.
Since the norm is a positive number,
and $|t-1\|\varphi_{\ep}'(t-1)| =
\frac{|t-1|}{\ep^2}|\varphi'((t-1)/\ep)| \leq
\frac{1}{\ep^2}\|\varphi'\|_{\infty}$, then
\begin{equation*}
    |\kappa(t)| \leq
    \left(\frac{2}{\ep}\|\varphi\|_{\infty}+\frac{1}{\ep^2}\|\varphi'\|_{\infty}\right)\frac{\|\PP_2\wedge\PP_1\|_2}{\|\bar{s}\PP_1
        + (1-\bar{s})\PP_2\|_2^3}, \quad \forall t \in \R.
\end{equation*}
This proves \eqref{eq:CurvatureBoundedOutsideMollifier}, and finishes
the proof.
\end{proof}

\begin{figure}
    \centering
    \includegraphics[width=\linewidth]{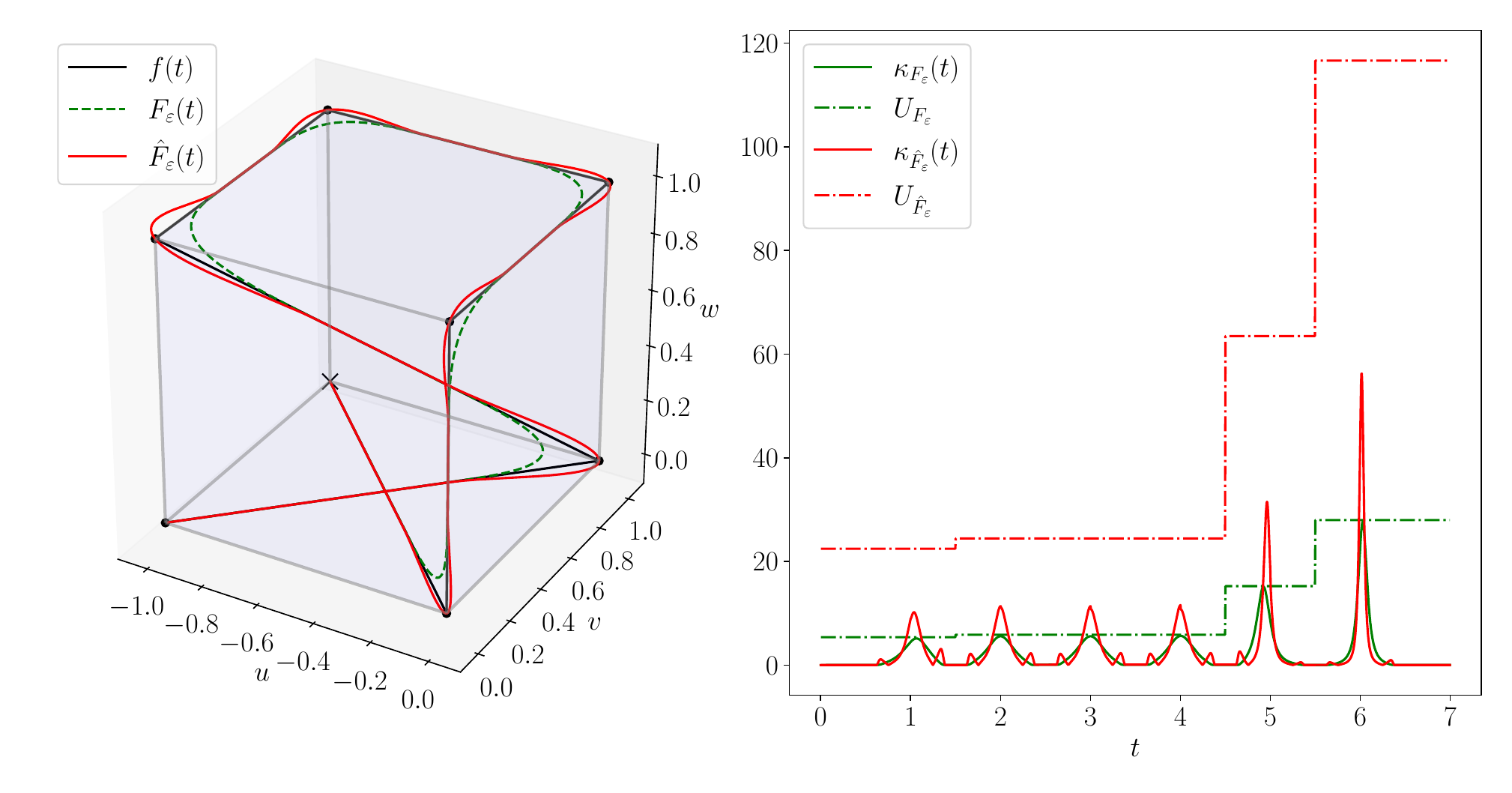}
    \caption{Representation of a $p=7$ polygonal chain in $\R^3$, and the curvature
    and curvature bounds of the conventional and directional mollifications. The
    coordinates in $\R^3$ are expressed as $(u,v,w)$.
    The left picture represents in black the polygonal chain curve $f$ confined
    in a unit volume cube in $\R^3$, shown in blue. It also represents its
    conventional $F_{\ep}$ and directional mollifications $\FF_{\ep}$ in dashed
    green and solid red, respectively. In all mollifications, the used mollifier is the one
    presented in \eqref{eq:OurMollifier}, with $\ep = 0.4$. Right picture
    represents in green the curvature of the conventional mollified curve
    $F_{\ep}$, and as a dashed green plot, the curvature upper bounds
    $\kappa_{F_{\ep}}$ for each pair of segment as follows. In the set $[0,1.5]$
    the curvature bound is computed using \eqref{eq:UpperBoundConvMolli} for the
    polygonal chain consisting of the points $P_0,P_1$ and $P_2$. In the set
    $[k+0.5,k+1.5]$ the curvature bound is computed using
    \eqref{eq:UpperBoundConvMolli} for the polygonal chain consisting of the
    points $P_{k}, P_{k+1}$ and $P_{k+2}$ for $k \in \{1,\dots,p-1\}$. The solid
    and red dashed plots represent, respectively, the curvature of
    $\FF_{\ep}$, and its curvature upper bounds $\kappa_{\FF_{\ep}}$ computed in the
    same manner as for the conventional mollification, but using the formula
    \eqref{eq:CurvatureBoundedOutsideMollifier} obtained in Proposition
    \ref{prop:CurvatureBoundingOutside}.}
    \label{fig:RepresentationCurvatureComparison}
\end{figure}
\begin{remark}
    Note that the curvature of the conventional mollifier,
    that is, the curvature of the curve $F_{\ep}$ (see \cite[Equation 7]{gonzalezcalvin2025efficientgenerationsmoothpaths}) denoted as
    $\kappa_{F_{\ep}}$, can be expressed as
    \begin{equation*}
        |\kappa_{F_{\ep}}(t)| = |\varphi_{\ep}(t-1)|\frac{
    \|\tilde{P}_2\wedge \tilde{P}_1\|_2}{\|\tilde{P}_1A_1(t)+\tilde{P}_2A_2(t)\|_2^3},
    \end{equation*}
    as long as $F_{\ep}' \neq 0$.
    It also admits an upper bound:
    \begin{equation}\label{eq:UpperBoundConvMolli}
        |\kappa_{F_{\ep}}(t)| \leq \frac{1}{\ep}\|\varphi\|_{\infty}\|\PP_2 \wedge \PP_1\|_2M(\PP_1,\PP_2),
    \end{equation}
    where
    \begin{equation*}
    M(\tilde{P}_1,\tilde{P}_2)
    := \begin{cases}
        \frac{1}{\left|\left|\tilde{P}_1
    \bar{s} +
    \left(1-\bar{s}\right)\tilde{P}_2\right|\right|_2^3},
    & 0 \leq \bar{s} \leq 1 \\
    \max\left\{\frac{1}{\|\tilde{P}_1\|_2^3}, \frac{1}{\|\tilde{P}_2\|_2^3}\right\}, & \text{ otherwise }
    \end{cases}.
    \end{equation*}

    Denote as $\kappa_{\FF_{\ep}}$ the curvature of $\FF_{\ep}$ and observe
    that, at $t = 1$, $2\kappa_{F_{\ep}}(1) = \kappa_{\FF_{\ep}}(1)$. Moreover,
    in the directional mollification case, the set in which the convex
    optimization problem (cf. equation \eqref{eq:ConvexOptProblem}) is carried
    out is in $\R$, while for the conventional mollifier it
    is in the set $[0,1]$,
    which is the justification of $M(\PP_1,\PP_2)$. Clearly, $\min_{s\in
    \R}\|s\PP_1+(1-s)\PP_2\|_2 \leq \min_{s\in[0,1]}\|s\PP_1+(1-s)\PP_2\|_2$.
    Moreover, in \eqref{eq:CurvatureBoundedOutsideMollifier} there is a
    dependence with $\frac{1}{\ep^2}$, $\varphi$ and $\varphi'$ while for the
    conventional mollifier is $\frac{1}{\ep}$, and $\varphi$. Thus, the upper
    bound is greater in the case of the directional mollification than for the standard mollifier \eqref{eq:OurMollifier}. See Figure
    \ref{fig:RepresentationCurvatureComparison} for a comparison between the
    conventional mollification and directional mollification, as well as their
    curvatures and upper bounds, computed for each pair of segments for a
    three-dimensional curve.
\end{remark}

\subsubsection{The general case}
It is then  natural to ask for an upper bound on the curvature regarding an
arbitrary number of vertices. From a practical perspective, to establish an
analytical upper bound on the curvature in this scenario, we propose the
following approach. We present this strategy rigorously as a lemma, whose result
follows directly from Theorem
\ref{thm:ConvergenceToLinesToSwitchPolygonalChain}.
\begin{lemma}\label{lem:CurvatureApproachPractical}
    Let $f$ be the curve defined in
    Definition \ref{def:polygonalchain}, and let
    $\bar{f}$ be its extension. Let $\varphi \in \DD_E(\R)$ and $\ep\in (0,\frac{1}{2})$.
    Define
    \begin{align*}
        \PP_r &:= P_{r}-P_{r-1}, \quad r \in \{1,\dots,p\}, \\
        \bar{s}_r &:= \frac{\dotProduct{\PP_{r+1}-\PP_{r}}{\PP_{r+1}}}{\|\PP_{r+1}-\PP_{r}\|_2^2}, \quad r \in \{1,\dots,p-1\}, \\
        A_r(t) &:= \int_{[r-1,r]}\varphi_{\ep}(t-s)ds, \quad r \in \{2,\dots,p-1\}, \quad t \in \R, \\
        A_{1}(t)&:= \int_{(-\infty,1]}\varphi_{\ep}(t-s)ds, \quad A_p(t):=\int_{[p,\infty)}\varphi_{\ep}(t-s)ds, \quad t \in \R.
    \end{align*}
    and suppose
    \begin{align*}
        0 \notin \{s\PP_{r} + (1-s)\PP_{r+1} \mid s \in \R\}, \quad r \in \{1,\dots,p-1\}.
    \end{align*}
    If we denote as $\FF_{\ep}$ the directional mollification of $\bar{f}$,
    the following statements are true:
    \begin{enumerate}
        \item The curvature of $\FF_{\ep}$ is well defined on $\R$ and can be computed
        for each $t \in \R$ as
        \begin{align}\label{eq:CurvatureCompleteOutsideMollifierFullSegments}
        &|\kappa(t)| = \begin{cases}
        |2\varphi_{\ep}(t-r)+(t-r)\varphi_{\ep}'(t-r)|\frac{\|\PP_{r+1} \wedge
            \PP_{r}\|_2}{L_r(t)},  &  t \in [r-\ep,r+\ep], r \in \{1,\dots,p-1\}\\
            0, & t \in\begin{cases}  [r-1+\ep,r-\ep] , r \in \{1,\dots,p\} \\
             (-\infty,\ep) \cup (p+\ep,\infty)\end{cases} \\
        \end{cases},
        \end{align}
        where
        \begin{align*}
            L_r(t)&:=\|\PP_r(A_r(t)-(t-r)\varphi_{\ep}(t-r))
        +\PP_{r+1}(A_{r+1}(t)+(t-r)\varphi_{\ep}(t-r))\|_2^3, \quad r \in \{1,\dots,p-1\}.
        \end{align*}
        \item     Furthermore, for each $r \in \{1,\dots,p\}$ an upper bound on the curvature of the trajectory generated by the consecutive vertices $P_{r-1}, P_{r},$ and $P_{r+1}$, denoted $\kappa_r$, can be obtained as
        \begin{equation}\label{eq:UpperBoundForPairOFSegments}
            \begin{split}
            |\kappa_{r}(t)| &\leq \left(\frac{2}{\ep}\|\varphi\|_{\infty}+\frac{1}{\ep^2}\|\varphi'\|_{\infty}\right)\frac{\|\PP_{r+1}\wedge
                \PP_r\|_2}{\|\bar{s}_r\PP_r+(1-\bar{s}_r)\PP_{r+1}\|_2^3} \\
                &\leq  \max\left\{\frac{1}{\ep},\frac{1}{\ep^2}\right\}\left(2\|\varphi\|_{\infty}+\|\varphi'\|_{\infty}\right)\frac{\|\PP_{r+1}\wedge
                \PP_r\|_2}{\|\bar{s}_r\PP_r+(1-\bar{s}_r)\PP_{r+1}\|_2^3}, \quad r \in \{1,\dots,p-1\}, \forall
            t \in \R,
            \end{split}
        \end{equation}
        and an upper bound for the complete curve as
        \begin{equation}\label{eq:CurvatureBoundedOutsideMollifierFullSegments}
            \begin{split}
            |\kappa(t)| &\leq \max_{r\in \{1,\dots,p-1\}}  \left(\frac{2}{\ep}\|\varphi\|_{\infty}+\frac{1}{\ep^2}\|\varphi'\|_{\infty}\right)\frac{\|\PP_{r+1}\wedge
                \PP_r\|_2}{\|\bar{s}_r\PP_r+(1-\bar{s}_r)\PP_{r+1}\|_2^3} \\
                &\leq \max_{r\in \{1,\dots,p-1\}}   \max\left\{\frac{1}{\ep},\frac{1}{\ep^2}\right\}\left(2\|\varphi\|_{\infty}+\|\varphi'\|_{\infty}\right)\frac{\|\PP_{r+1}\wedge
                \PP_r\|_2}{\|\bar{s}_r\PP_r+(1-\bar{s}_r)\PP_{r+1}\|_2^3},
                \quad \forall
            t \in \R.
            \end{split}
        \end{equation}
    \end{enumerate}
\end{lemma}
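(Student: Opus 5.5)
The plan is a localization argument that reduces everything to the three-vertex case already treated in Proposition \ref{prop:CurvatureBoundingOutside}. Since $\varphi_{\ep}$ is supported in $[-\ep,\ep]$ and $\ep\in(0,\frac12)$, the value of $\FF_{\ep}$ and of its derivatives at $t$ depends only on $\bar f$ restricted to $[t-\ep,t+\ep]$.

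\textbf{Step 1: partition of $\R$.} Split $\R$ into the windows $[r-\ep,r+\ep]$ for $r\in\{1,\dots,p-1\}$, and the complementary pieces $[r-1+\ep,r-\ep]$ together with the unbounded ends. Because $2\ep<1$, each window $[r-\ep,r+\ep]$ meets only the two segments adjacent to $P_r$.

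\textbf{Step 2: complementary pieces.} On $V_r=[r-1+\ep,r-\ep]$, Theorem \ref{thm:ConvergenceToLinesToSwitchPolygonalChain} gives $\FF_{\ep}=f$, an affine map with velocity $\PP_r$. Hence $\FF_{\ep}''=0$ and $\FF_{\ep}'=\PP_r$, which is nonzero by the hypothesis with $s=1$ (or $s=0$). So the curve is regular there and $\kappa=0$.

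On the unbounded ends, $\bar f$ is the extension of a single line near $t$. The same argument via Theorem \ref{thm:TheoremThatSolvesNewRegularization}(4) again gives $\kappa=0$. One should check the indexing of the ends: the statement writes $(-\infty,\ep)\cup(p+\ep,\infty)$, and presumably the intended left piece is $(-\infty,1-\ep]$ and the right piece is $[p-1+\ep,\infty)$, which is consistent with the extension.

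\textbf{Step 3: windows around interior vertices.} For $t\in[r-\ep,r+\ep]$, replace $\bar f$ by the extended three-vertex chain through $P_{r-1},P_r,P_{r+1}$, shifted so that its kink sits at $r$. This replacement does not change $\FF_{\ep}$ on the window, since the two functions agree on $[t-\ep,t+\ep]$. The key point is to verify that the weights $A_r(t)$ and $A_{r+1}(t)$ defined in the lemma coincide with the $A_1(t)$ and $A_2(t)$ of Proposition \ref{prop:CurvatureBoundingOutside} after translation by $r-1$. This holds because $\varphi_{\ep}(t-\cdot)$ is supported in $[r-2\ep,r+2\ep]\subset[r-1,r+1]$, so integrating over $[r-1,r]$ is the same as integrating over $(-\infty,r]$, and likewise on the other side.

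With that identification, formula \eqref{eq:FormulaDerivativeSimple} applies with $t-1$ replaced by $t-r$. Its coefficients sum to one, so the hypothesis $0\notin\{s\PP_r+(1-s)\PP_{r+1}\}$ gives regularity. Then \eqref{eq:CurvatureCompleteOutsideMollifier} yields \eqref{eq:CurvatureCompleteOutsideMollifierFullSegments}. The main obstacle is this bookkeeping: making sure the support argument is airtight, including when the window touches the ends $r=1$ and $r=p-1$, where the $A$'s are defined over half-lines.

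\textbf{Step 4: bounds.} The per-vertex bound \eqref{eq:UpperBoundForPairOFSegments} is then exactly \eqref{eq:CurvatureBoundedOutsideMollifier} for the translated three-vertex chain. The global bound \eqref{eq:CurvatureBoundedOutsideMollifierFullSegments} follows by taking the maximum over $r$, because $|\kappa|$ vanishes off the windows and each window is governed by a single $r$. The second inequality in each bound uses $\frac{2}{\ep}a+\frac{1}{\ep^2}b\le\max\{\frac1\ep,\frac1{\ep^2}\}(2a+b)$.
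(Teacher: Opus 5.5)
Your proposal is correct and is essentially the paper's proof. Both argue the same way: the curvature vanishes on the sets $V_r$ and on the unbounded ends by Theorem~\ref{thm:ConvergenceToLinesToSwitchPolygonalChain}; on each window $[r-\ep,r+\ep]$ the problem reduces to Proposition~\ref{prop:CurvatureBoundingOutside}, because with $\ep<\frac{1}{2}$ only the two segments adjacent to $P_r$ enter the convolution; and the global bound follows by taking a maximum over $r$. Your extra checks are more careful than the paper's brief argument: regularity on $V_r$ from $\PP_r\neq 0$, the matching of the weights, and the corrected end pieces, since the stated $(p+\ep,\infty)$ leaves $(p-\ep,p+\ep]$ uncovered.

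The end-case bookkeeping you flagged but did not carry out does matter at $r=p-1$. There your ``likewise on the other side'' step, and the stated formula for $L_{p-1}$, only hold if $A_p$ is read as $\int_{[p-1,\infty)}\varphi_{\ep}(t-s)\,ds$. The literal $\int_{[p,\infty)}\varphi_{\ep}(t-s)\,ds$ vanishes identically on $[p-1-\ep,p-1+\ep]$ when $\ep<\frac{1}{2}$. So this typo in the statement must be corrected for the identification with Proposition~\ref{prop:CurvatureBoundingOutside} to go through.
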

\begin{proof}
    Being $\ep \in (0,\frac{1}{2})$, we know, by Theorem
    \ref{thm:ConvergenceToLinesToSwitchPolygonalChain} that
    $f|_{V_r}=\FF_{\ep}|_{V_r}$ where $V_r = [r-1+\ep,r-\ep]$ for $r \in
    \{1,\dots,p\}$. Thus, $\kappa|_{V_r} = 0$ for all $r \in \{1,\dots,p\}$
    being $f|_{V_r}$ a straight line. The same result holds for $t \in
    (-\infty,\ep) \cup (p+\ep,\infty)$, because we are considering the extension
    of the curve. Moreover, in the sets $W_r := [r-\ep,r+\ep]$ for $r \in
    \{1,\dots,p-1\}$, we obtain the curvature expressions and curvature upper
    bounds using the results of Proposition \ref{prop:CurvatureBoundingOutside}.
    Indeed, due to the choice of $\ep > 0$ and the compactness of the support of
    the mollifier, only two adjacent segments are evaluated for any $t \in W_r$.
    From Proposition \ref{prop:CurvatureBoundingOutside} we get
    \eqref{eq:CurvatureCompleteOutsideMollifierFullSegments} and combining these
    results we obtain \eqref{eq:UpperBoundForPairOFSegments} and the upper bound
    \eqref{eq:CurvatureBoundedOutsideMollifierFullSegments}.
\end{proof}
\begin{remark}[A practical approach to the choice of $\ep > 0$]\label{rmk:PracticalApproach}
    We propose here two approaches to computing a valid choice of $\ep > 0$ to meet an upper-bound criterion.
    \begin{itemize}
        \item Given a polygonal chain with $p+1$ vertices and a pre-specified
        maximum allowable curvature, Lemma \ref{lem:CurvatureApproachPractical}
        and Equation \eqref{eq:UpperBoundForPairOFSegments} can be used to
        compute $p$ local parameters $\varepsilon_r$, one for each pair of
        segments. If $\varepsilon_r \in (0,\frac{1}{2})$ for all $r \in
        \{1,\dots,p\}$, the smoothing regions do not overlap. This lack of
        segment interaction guarantees the validity of the localized bound in
        \eqref{eq:UpperBoundForPairOFSegments}. Setting the global parameter
        $\varepsilon > 0$ as the maximum of all local $\varepsilon_r$ ensures
        that every segment pair satisfies the curvature criteria. Consequently,
        by virtue of \eqref{eq:CurvatureBoundedOutsideMollifierFullSegments},
        the curvature of the complete composite curve is guaranteed to satisfy
        the global constraints.

        \item Alternatively, if the condition $\varepsilon_r \in
        (0,\frac{1}{2})$ fails to hold for some $r \in \{1,\dots,p\}$, the value
        $\max_{r} \varepsilon_r$ can instead be used as an initial guess for an
        optimization problem. In this formulation, the decision variable is the
        global directional mollification parameter $\varepsilon$.
    \end{itemize}
In Figure \ref{fig:RepresentationCurvatureComparison}, since $\ep = 0.4$, the
upper bound for each pair of segments is computed as in Lemma
\ref{lem:CurvatureApproachPractical}. As can be seen, the curvature bound
estimates can indeed be used as the initial condition for an optimization
problem, or in this case, as an actual upper bound for the complete trajectory.
\end{remark}

\section{Combining the conventional and directional
mollifications}\label{sec:Combination}
This section focuses on the combination of the conventional and the directional
mollification approaches. In particular, given a  locally integrable
function\footnote{Clearly, this could be generalized to any function $f \in
L^p_{\mathrm{loc}}(\R^n,\R^m)$.} $f : \R \to \R^n$, we consider for any $\gamma \in \R$,
mollifier $\varphi \in \DD_E(\R)$ and $\ep > 0$, the new function $G_{\ep}^{\gamma}
: \R \to \R^n$ defined as
\begin{equation}\label{eq:Combination}
    G_{\ep}^{\gamma}(t) := \gamma \FF_{\ep}(t) + (1-\gamma)F_{\ep}(t) =
    F_{\ep}(t) + \gamma D_{\ep}(t).
\end{equation}
Clearly, if $\gamma \in [0,1]$, then $\{G_{\ep}^{\gamma} : \R \to \R^n \mid
\gamma \in [0,1]\}$ is the family of curves that arise as a convex combination
of $\FF_{\ep}$ with $F_{\ep}$ for a given $\ep > 0$. That is, by constructing
$G_{\ep}^{\gamma}$ as above, we obtain a path homotopy of the conventional and
directional mollifications. This allows, from a practical standpoint, to
generate an infinite family of curves for which the conventional and directional
mollifications are particular cases. If this family inherits the smoothness and
convergence properties of the previous methods, then it significantly raises its
potential applications. The following proposition ensures that we do not lose
any smoothness or convergence properties when combining the conventional and
directional mollifications.
\begin{proposition}
    Let $f : \R \to \R^n$ satisfy (for each component) the assumptions
    of Theorem \ref{thm:TheoremThatSolvesNewRegularization} and its statements \ref{thm:SolvReg1} and \ref{thm:SolvReg2}. Let $\varphi
    \in \DD_E(\R)$ and $\ep > 0$. Then for
    each $\gamma \in \R$ the function $G_{\ep}^{\gamma}$ defined
    as in \eqref{eq:Combination} satisfies that $G_{\ep}^{\gamma} \in
    C^{\infty}(\R,\R^n)$, and
    $G_{\ep}^{\gamma} \to f$ as $\ep \to 0^+$ pointwise and on compact
    subsets of $\R$.
\end{proposition}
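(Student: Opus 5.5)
The plan is to use the decomposition in \eqref{eq:Combination}, $G_{\ep}^{\gamma} = F_{\ep} + \gamma D_{\ep}$, and to work component by component. This is legitimate because both the convolution and the directional derivative term are defined componentwise for curves $f:\R\to\R^n$. Both properties, smoothness and convergence, then reduce to the corresponding facts for $F_{\ep}$ and $D_{\ep}$ separately, combined through linearity.

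For smoothness, fix a component $f_i$. Theorem \ref{thm:PropertiesOfMollifying} gives $F_{\ep,i} = f_i * \varphi_{\ep} \in C^{\infty}(\R)$. Statement \ref{thm:SolvReg1} of Theorem \ref{thm:TheoremThatSolvesNewRegularization}, whose hypotheses are assumed, gives $D_{\ep,i} = Df_i * \id\varphi_{\ep} \in C^{\infty}(\R)$. The space $C^{\infty}(\R)$ is a vector space, so $F_{\ep,i} + \gamma D_{\ep,i} \in C^{\infty}(\R)$ for every $\gamma\in\R$. Hence $G_{\ep}^{\gamma} \in C^{\infty}(\R,\R^n)$, with derivatives $\partial^{\alpha}G^{\gamma}_{\ep} = \partial^{\alpha}\varphi_{\ep} * f + \gamma\, Df * \partial^{\alpha}(\id\varphi_{\ep})$.

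For convergence, fix a compact $K\subset\R$. Since $f$ is locally Lipschitz, it is continuous, so item 3 of Theorem \ref{thm:PropertiesOfMollifying} gives $F_{\ep}\to f$ uniformly on $K$. The main step is to control $D_{\ep}$ uniformly. Here I would reuse the estimate obtained in the proof of statement \ref{thm:SolvReg2}, rather than its conclusion, since that conclusion is stated for $\FF_{\ep}$ and contains a typo ($\FF_{\ep}\to 0$). Taking $L=L(K_{2\ep})$ as there, one has $|f^{\circ}(x-s)(s)|\le L\|s\|\le L\ep$ for $x\in K$ and $s\in\overline{B}(0,\ep)$. This yields $\sup_{x\in K}|D_{\ep}(x)|\le L\ep$.

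One point needs care: $L$ depends on $\ep$ through $K_{2\ep}$. I would fix $\ep_0=1/2$, say, and use the single Lipschitz constant $L_0$ of $f$ on $K_{2\ep_0}$, which dominates the constants for all $\ep\le\ep_0$. The triangle inequality then gives
\begin{equation*}
\sup_{x\in K}\|G^{\gamma}_{\ep}(x)-f(x)\| \le \sup_{x\in K}\|F_{\ep}(x)-f(x)\| + |\gamma| L_0\ep \longrightarrow 0 \quad (\ep\to0^+).
\end{equation*}
This applies componentwise, so convergence holds in any norm on $\R^n$. Pointwise convergence follows by taking $K=\{x\}$. The argument has no genuine obstacle; the only care needed is the $\ep$-uniform Lipschitz constant and the fact that $\gamma$ is fixed, so the factor $|\gamma|$ does not spoil the $O(\ep)$ decay.
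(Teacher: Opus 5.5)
Your proof is correct and is essentially the paper's argument. The paper writes $G_{\ep}^{\gamma} = \gamma\FF_{\ep} + (1-\gamma)F_{\ep}$ and uses that $C^{\infty}$ is a vector space together with $\FF_{\ep}\to f$ and $F_{\ep}\to f$ uniformly on compacts; you use the equivalent split $F_{\ep}+\gamma D_{\ep}$ and the intermediate bound $\sup_K|D_{\ep}|\le L\ep$ from the proof of statement 2, and your fixed, $\ep$-independent Lipschitz constant on $K_{2\ep_0}$ makes explicit a point the paper leaves implicit.
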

\begin{proof}
    The result follows noting that $C^{\infty}(\R,\R^n)$ is a vector space,
    and by Theorem \ref{thm:TheoremThatSolvesNewRegularization} we have
    that $\FF_{\ep},F_{\ep} \in C^{\infty}(\R,\R^n)$ and $\FF_{\ep} \to f$,
    $F_{\ep} \to f$ pointwise, and uniformly on compact subsets of $\R$ as $\ep \to 0^+$.
    That is $G_{\ep}^{\gamma} \to \gamma f + (1-\gamma)f = f$ pointwise,
    and uniformly on compact subsets of $\R$ as $\ep \to 0^+$.
\end{proof}
It is clear that due to Theorem \ref{thm:TheoremThatSolvesNewRegularization} and
Lemma \ref{lem:DirectionalDerivativeAndNormalDerivative} the derivatives of
$G_{\ep}^{\gamma}$ are easy to compute. Let us now proceed with a concrete
example. Suppose that $f$ is defined as in \eqref{eq:polygonalchain}. First we
present an extremely versatile proposition, similar to Theorem
\ref{thm:ConvergenceToLinesToSwitchPolygonalChain}.
\begin{proposition}\label{prop:CombinationEqualAtHalf}
    Let $f$ be as in Definition \ref{def:polygonalchain}, $\varphi \in
    \DD_E(\R)$ and $\ep >0$. For $\gamma \in \R$ we
    consider $G_{\ep}^{\gamma} := \gamma \FF_{\ep} + (1-\gamma)F_{\ep} =
    F_{\ep} +
    \gamma D_{\ep}$,
    where $F_{\ep}$ and $D_{\ep}$ are the functions defined in
    Theorem \ref{thm:TheoremThatSolvesNewRegularization} for the extension
    $\bar{f}$ as in \eqref{eq:polygonalchainextended}.
    Suppose that  $\ep \in (0,\frac{1}{2})$, let $r \in \{1,\dots,p\}$ and define
    $V_r:= [r-1+\ep,r-\ep]$.
    Then it holds that $f|_{V_r} = F_{\ep}|_{V_r} = \FF_{\ep}|_{V_r} =
    G_{\ep}^{\gamma}|_{V_r}$ for all $ \gamma \in \R$.
\end{proposition}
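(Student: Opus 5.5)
The plan is to reduce everything to the second statement of Theorem \ref{thm:ConvergenceToLinesToSwitchPolygonalChain}, which already gives $f|_{V_r} = F_{\ep}|_{V_r} = \FF_{\ep}|_{V_r}$ for $\ep \in (0,\frac{1}{2})$ and $r \in \{1,\dots,p\}$. Since $G_{\ep}^{\gamma} = \gamma \FF_{\ep} + (1-\gamma)F_{\ep}$ pointwise, for any $t \in V_r$ I will write $G_{\ep}^{\gamma}(t) = \gamma f(t) + (1-\gamma) f(t) = f(t)$. This conclusion holds for every $\gamma \in \R$, not only $\gamma\in[0,1]$, because it uses only linearity and no convexity.

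Alternatively, and as a check, I will use the form $G_{\ep}^{\gamma} = F_{\ep} + \gamma D_{\ep}$ and show directly that $D_{\ep}|_{V_r} = 0$. By Lemma \ref{lem:DirectionalDerivativeAndNormalDerivative}, $D_{\ep}(t) = \int_{[-\ep,\ep]} D\bar{f}(t-s)\, s\,\varphi_{\ep}(s)\,ds$. For $t \in V_r$ and $|s|\le \ep$ we have $t-s \in [r-1,r]$, so $D\bar f(t-s) = P_r - P_{r-1}$ almost everywhere, except possibly at the endpoints, which form a null set. Hence $D_{\ep}(t) = (P_r-P_{r-1})\int_{[-\ep,\ep]} s\varphi_{\ep}(s)\,ds = 0$, since $\varphi$ is even. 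Together with $F_{\ep}|_{V_r} = f|_{V_r}$, this again gives the claim.

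There is no real obstacle. The only point needing care is that $V_r$ is nonempty and that the segment containment $t\pm\ep \in [r-1,r]$ holds, which requires $\ep<\frac{1}{2}$. For $r=1$ and $r=p$, the extension $\bar f$ in \eqref{eq:polygonalchainextended} agrees with the first and last segments on $[0,1]$ and $[p-1,p]$, so these cases are covered as well.
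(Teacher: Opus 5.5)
Your proposal is correct and is essentially the paper's argument. The paper also localizes as in Theorem \ref{thm:ConvergenceToLinesToSwitchPolygonalChain} and uses that $D_{\ep}$ vanishes wherever the convolution sees only a single affine piece, giving $G_{\ep}^{\gamma} = F_{\ep} + \gamma D_{\ep} = F_{\ep} = f$ on $V_r$; this is exactly your second computation, and your primary route via $G_{\ep}^{\gamma} = \gamma\FF_{\ep} + (1-\gamma)F_{\ep}$ and linearity is an equally valid shortcut resting on the same fact.
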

\begin{proof}
    We can proceed in the same manner as in the proof of
    Theorem \ref{thm:ConvergenceToLinesToSwitchPolygonalChain}, noting from the proof of
    Theorem \ref{thm:TheoremThatSolvesNewRegularization} that if $f$ is an affine
    function then $D_{\ep}$ is identically zero, and so is $\gamma D_{\ep}$
    for any $\gamma \in \R$.
\end{proof}
\begin{remark}
    The same conclusions as in Remark \ref{rmk:VersatileApproach} hold.
    That is, at the sets where the original function $f$ and any element
    of the family of functions $\{G_{\ep}^{\gamma}\}_{\gamma\in\R}$ are
    equal, we can switch
    from one member of this family to another, obtaining a new function
    which is smooth. This includes $\gamma \in \{0,1\}$ which is the
    particular result of Theorem \ref{thm:ConvergenceToLinesToSwitchPolygonalChain} and
    Remark \ref{rmk:VersatileApproach}.
\end{remark}
Moreover, given $\gamma \in \R$ we are interested in a similar upper bound on the
curvature for $G_{\ep}^{\gamma}$ as the one presented in
Proposition \ref{prop:CurvatureBoundingOutside}.
\begin{proposition}\label{prop:CurvatureBoundingCombination}
    Let $f$ be as in Definition \ref{def:polygonalchain}, for $p=2$,
    and consider its extension $\bar{f}$. For any $\varphi \in
    \DD_E(\R)$, $\ep >0$ and $\gamma \in \R$ we consider $G_{\ep}^{\gamma} :=
    \gamma \FF_{\ep} + (1-\gamma)F_{\ep} = F_{\ep} +
    \gamma D_{\ep}$,
    where $F_{\ep}$ and $D_{\ep}$ are the functions defined in
    Theorem \ref{thm:TheoremThatSolvesNewRegularization} for $\bar{f}$.
    Then, if $0 \notin \{s\PP_1 + (1-s)\PP_2 \mid s \in \R\}$,
    the absolute value of the curvature of $G_{\ep}^{\gamma}$, defined as
    \begin{equation*}
        t \in \R \mapsto |\kappa(t)| = \frac{\left\|\left(G_{\ep}^{\gamma}\right)''(t)\wedge
    \left(G_{\ep}^{\gamma}\right)'(t)\right\|_2}{
    \left\|\left(G_{\ep}^{\gamma}\right)'(t)\right\|_2^3},
    \end{equation*}
    can be bounded by
    \begin{equation}\label{eq:CurvatureUpperBoundCombination}
        |\kappa(t)| \leq \left(|\gamma|\frac{\|\varphi'\|_{\infty}}{\ep^2}
        + |1+\gamma|\frac{\|\varphi\|_{\infty}}{\ep}\right)
        \frac{\|\PP_2\wedge\PP_1\|_2}{\|\bar{s}\PP_1 + (1-\bar{s})\PP_2\|_2^3},
    \end{equation}
    for any $t \in \R$,
    where $\bar{s}$ and $\PP_i$ are defined as in
    Proposition \ref{prop:CurvatureBoundingOutside}.
\end{proposition}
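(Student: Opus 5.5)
We follow the proof of Proposition \ref{prop:CurvatureBoundingOutside} almost verbatim, exploiting the linearity $G_{\ep}^{\gamma} = F_{\ep} + \gamma D_{\ep}$. From \cite{gonzalezcalvin2025efficientgenerationsmoothpaths} we have $F_{\ep}'(t) = \PP_1A_1(t) + \PP_2A_2(t)$ and $F_{\ep}''(t) = \varphi_{\ep}(t-1)(\PP_2-\PP_1)$. The integration by parts in the proof of Proposition \ref{prop:CurvatureBoundingOutside} gives $D_{\ep}'(t) = (t-1)\varphi_{\ep}(t-1)(\PP_2-\PP_1)$ and $D_{\ep}''(t) = (\varphi_{\ep}(t-1)+(t-1)\varphi_{\ep}'(t-1))(\PP_2-\PP_1)$. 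Hence
\begin{align*}
(G_{\ep}^{\gamma})'(t) &= \PP_1\bigl(A_1(t)-\gamma(t-1)\varphi_{\ep}(t-1)\bigr) + \PP_2\bigl(A_2(t)+\gamma(t-1)\varphi_{\ep}(t-1)\bigr),\\
(G_{\ep}^{\gamma})''(t) &= \bigl((1+\gamma)\varphi_{\ep}(t-1)+\gamma(t-1)\varphi_{\ep}'(t-1)\bigr)(\PP_2-\PP_1).
\end{align*}

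For the numerator, we use bilinearity and antisymmetry of the wedge product. Since $(\PP_2-\PP_1)\wedge(a\PP_1+b\PP_2)$ with $a+b=1$ equals $\PP_2\wedge\PP_1$, we get
\begin{equation*}
(G_{\ep}^{\gamma})''(t)\wedge(G_{\ep}^{\gamma})'(t) = \bigl((1+\gamma)\varphi_{\ep}(t-1)+\gamma(t-1)\varphi_{\ep}'(t-1)\bigr)(\PP_2\wedge\PP_1).
\end{equation*}
Next we use $\varphi_{\ep} = \ep^{-1}\varphi(\cdot/\ep)$, so that $\|\varphi_{\ep}\|_\infty = \|\varphi\|_\infty/\ep$. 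Since $\varphi_{\ep}'$ is supported in $[-\ep,\ep]$, we have $|(t-1)\varphi_{\ep}'(t-1)| \le \ep\cdot\ep^{-2}\|\varphi'\|_\infty$. The numerator is therefore bounded by $\bigl(|1+\gamma|\|\varphi\|_\infty/\ep + |\gamma|\|\varphi'\|_\infty/\ep^{2}\bigr)\|\PP_2\wedge\PP_1\|_2$. This uses $|t-1|\le \ep$ on the support, which gives a factor $1/\ep$ for the derivative term. That is sharper than the paper's stated $1/\ep^2$; the stated bound follows whenever $\ep\le 1$, and in the paper's Proposition \ref{prop:CurvatureBoundingOutside} the same $1/\ep^2$ form was used, so we match it by the same crude estimate.

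For the denominator, the two scalar coefficients of $\PP_1,\PP_2$ in $(G_{\ep}^{\gamma})'(t)$ sum to $A_1(t)+A_2(t)=1$ for every $t$ and every $\gamma$. Thus $(G_{\ep}^{\gamma})'(t)$ lies on the affine line $\{s\PP_1+(1-s)\PP_2 \mid s\in\R\}$. By hypothesis this line avoids $0$, so the curve is regular, and we get
\begin{equation*}
\|(G_{\ep}^{\gamma})'(t)\|_2 \ge \min_{s\in\R}\|s\PP_1+(1-s)\PP_2\|_2 = \|\bar s\PP_1+(1-\bar s)\PP_2\|_2 > 0,
\end{equation*}
where the minimizer $\bar s$ comes from the convex minimization already carried out in \eqref{eq:ConvexOptProblem}. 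Dividing the numerator bound by the cube of this lower bound yields \eqref{eq:CurvatureUpperBoundCombination}.

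The main obstacle is the last point. For $\gamma\notin[0,1]$ or large $|\gamma|$, the coefficient $A_1-\gamma(t-1)\varphi_{\ep}(t-1)$ ranges outside $[0,1]$. One must therefore minimize over all of $\R$ rather than $[0,1]$; this is exactly why the hypothesis is stated on the whole line.
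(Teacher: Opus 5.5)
Your argument follows the paper's own proof. It uses the same formulas for $F_{\ep}'$, $F_{\ep}''$, $D_{\ep}'$, $D_{\ep}''$ from Proposition \ref{prop:CurvatureBoundingOutside}, and the same observation that the coefficients of $\PP_1,\PP_2$ in $(G_{\ep}^{\gamma})'$ sum to $A_1+A_2=1$, which is why the minimization runs over all of $\R$. It also uses the same wedge identity. All of that is correct.

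The step that fails is your closing claim that you ``match'' the $\ep^{-2}$ form ``by the same crude estimate''. You correctly found $|(t-1)\varphi_{\ep}'(t-1)| = |u\varphi'(u)|/\ep \le \|\varphi'\|_{\infty}/\ep$ with $u=(t-1)/\ep$. This gives $\|\varphi'\|_{\infty}/\ep^{2}$ only when $\ep\le 1$. The paper's estimate $\frac{|t-1|}{\ep^2}|\varphi'(\frac{t-1}{\ep})| \le \frac{1}{\ep^2}\|\varphi'\|_{\infty}$ is not a cruder but still valid bound. It discards the factor $|t-1|$, so it silently uses $|t-1|\le 1$ on the support, which is again the restriction $\ep\le 1$.

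No argument can cover all $\ep>1$, because the inequality is false for large $\ep$. In the variable $u$, the derivative $(G_{\ep}^{\gamma})'(t) = \PP_1(\int_u^{\infty}\varphi - \gamma u\varphi(u)) + \PP_2(\int_{-\infty}^{u}\varphi + \gamma u\varphi(u))$ does not depend on $\ep$. The wedge coefficient is $\ep^{-1}((1+\gamma)\varphi(u)+\gamma u\varphi'(u))$. Hence $\sup_t|\kappa(t)| = C_\gamma/\ep$ with $C_\gamma$ independent of $\ep$.

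For $\gamma=-1$, the right-hand side of \eqref{eq:CurvatureUpperBoundCombination} is a constant times $\ep^{-2}$. Meanwhile $C_{-1}>0$ whenever $\PP_2\wedge\PP_1\neq 0$. Concretely, take $P_0=(0,0)$, $P_1=(1,0)$, $P_2=(1,1)$, the bump of Example \ref{example:OurMollifier}, $\gamma=-1$ and $\ep=3$. The curvature at $t=1+0.787\,\ep$ is about $0.69$, while the claimed bound is about $0.57$.

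So your proof, like the paper's, establishes the proposition only for $\ep\in(0,1]$. For all $\ep>0$, the correct statement has $\|\varphi'\|_{\infty}/\ep$ (or $\max\{\ep^{-1},\ep^{-2}\}\|\varphi'\|_{\infty}$) in place of $\|\varphi'\|_{\infty}/\ep^{2}$. State that restriction explicitly rather than claiming to recover the $\ep^{-2}$ form for every $\ep$.
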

\begin{proof}
    From Proposition \ref{prop:CurvatureBoundingOutside} and its proof, we know that (cf. Proposition \ref{prop:CurvatureBoundingOutside} for
    the notation), for any $t \in \R$,
    $F_{\ep}'(t) = \PP_1A_1(t) + \PP_2A_2(t)$, $F_{\ep}''(t) =
    \varphi_{\ep}(t-1)(\PP_2-\PP_1)$, $D_{\ep}'(t) =
    (t-1)\varphi_{\ep}(t-1)(\PP_2-\PP_1)$, $D_{\ep}''(t) =
    \left(\varphi_{\ep}(t-1) +
    (t-1)\varphi_{\ep}'(t-1)\right)(\PP_2-\PP_1)$, and $A_1(t)+A_2(t)=1$.
    Thus
    \begin{align*}
        \left({G_{\ep}^{\gamma}}\right)'(t) = \PP_1(A_1(t) -
        \gamma(t-1)\varphi_{\ep}(t-1))
        + \PP_2
        (A_2(t)+\gamma(t-1)\varphi_{\ep}(t-1)),
    \end{align*}
    and note that the coefficients of $\PP_1$ and $\PP_2$ sum to 1.
    Therefore $\left\|\left({G_{\ep}^{\gamma}}\right)'(t)\right\|_2 \geq \min_{s\in \R}\|s
    \PP_1 +
    (1-s)\PP_2\|_2 > 0$ for any $t \in \R$,
    which is the same optimization problem as in
    \eqref{eq:ConvexOptProblem}, and therefore $\bar{s}$ (cf.
    Proposition \ref{prop:CurvatureBoundingOutside}) is the minimum. Moreover, by some
    computations and noting that $A_1(t) +A_2(t) = 1$, we have that
    \begin{equation*}
        \left({G_{\ep}^{\gamma}}\right)''(t) \wedge
    \left({G_{\ep}^{\gamma}}\right)'(t) = (\gamma (t-1)\varphi_{\ep}'(t-1) +
    \varphi_{\ep}(t-1)(1+\gamma))(\PP_2\wedge \PP_1),
    \end{equation*}
    which implies that
    \begin{align*}
        |\kappa(t)| &= \frac{\left\|\left({G_{\ep}^{\gamma}}\right)''(t) \wedge
            \left({G_{\ep}^{\gamma}}\right)'(t)\right\|_2}{\left\|\left({G_{\ep}^{\gamma}}\right)'(t)\right\|_2^3}
        \leq
        \left(|\gamma|\frac{\|\varphi'\|_{\infty}}{\ep^2} +
        |1+\gamma|\frac{\|\varphi\|_{\infty}}{\ep}\right)
        \frac{\|\PP_2\wedge\PP_1\|_2}{\|\bar{s}\PP_1+(1-\bar{s})\PP_2\|_2^3}, \quad \forall t \in \R.
    \end{align*}
\end{proof}
\begin{remark}
    Observe that if $\gamma = 1$ in \eqref{eq:CurvatureUpperBoundCombination}, we recover the upper bound of \eqref{eq:CurvatureBoundedOutsideMollifier}, validating the results. Moreover, again, by choosing $\ep \in (0,\frac{1}{2})$ a similar approach as in Lemma \ref{lem:CurvatureApproachPractical} and Remark \ref{rmk:PracticalApproach} can be used to obtain an upper bound on the curvature of the complete curve.
\end{remark}

\begin{remark}[Practical guidelines for selecting $\gamma$]\label{rmk:ChoiceOfGamma}
    As established in Proposition \ref{prop:CurvatureBoundingCombination}, the
    analytical upper bound for the curvature $|\kappa|$ is explicitly
    controlled by the interaction between the smoothing parameter $\ep$
    and the parameter $\gamma$. For a fixed $\ep > 0$, the
    factor
    \begin{equation*}
    \left(|\gamma|\frac{\|\varphi'\|_{\infty}}{\ep^2} +
    |1+\gamma|\frac{\|\varphi\|_{\infty}}{\ep}\right)
    \end{equation*}
    is strictly increasing with respect to nonnegative values of $\gamma$,
    attaining its global minimum over the domain $\gamma \geq 0$ at $\gamma =
    0$. This minimum corresponds to the conventional mollification,
    while setting $\gamma = 1$ yields the directional mollification approach.
    Because the curvature upper bound increases monotonically as a function of
    $\gamma$, we propose the following practical rules-of-thumb:
    \begin{itemize}
        \item Strict vertex interpolation ($\gamma = 1$): If exact
        vertex intersection is a mandatory constraint of the application, the
        parameter must be set to $\gamma = 1$, which corresponds to the
        directional mollification. In this scenario, the smoothing
        parameter $\ep > 0$ should be tuned (using the strategies
        outlined in Remark \ref{rmk:PracticalApproach}) to keep the resulting
        curvature below the maximum allowable threshold.

        \item Not strict vertex interpolation but convex hull confinement
        ($\gamma = 0$): If exact vertex interpolation is not strictly required
        and the resulting curve must be confined into the convex hull of the
        vertices, $\gamma = 0$---that is, the conventional mollification---must be selected. Moreover, this value of $\gamma$ minimizes the
        analytical curvature bound that must be tuned to fit the requirements
        using $\ep>0$.

        \item Blending
        ($\gamma \in (0,1)$): If vertex interpolation and convex hull
        confinement are not required, $\gamma$ can be chosen freely
        between $0$ and $1$. In this case $G_{\ep}^{\gamma}$ acts as a
        convex combination of the conventional and directional mollification.
        While exact vertex intersection is lost for any $\gamma < 1$, values
        closer to $1$ pull the smoothed curve significantly toward the original
        vertices than conventional mollification ($\gamma=0$), thereby minimizing the
        tracking error. Moreover, the analytical upper bound on the curvature is
        highly sensitive to $\ep \to 0$, where it behaves as $O(\ep^{-2})$. In
        contrast, letting the parameter $\gamma \to 1$ yields---for fixed
        $\ep$---purely linear scaling growth in the curvature bound.
        Consequently, for a pre-specified curvature upper bound threshold and
        not a vertex intersection restriction, an effective strategy is to
        first construct the conventional mollification ($\gamma=0$) with an appropriate, non-overlapping $\ep > 0$, and subsequently increase $\gamma \to
        1$ up to the marginal limit of the curvature bound. This approach
        successfully decreases the spatial tracking error to the vertices
        without violating the curvature restrictions.
        \end{itemize}
    \end{remark}

For a comparison between spline methods and the representation of the set
\begin{equation*}
    \{G_{\ep}^{\gamma} \mid \gamma \in \{-1, -0.5, 0, 0.5, 1, 1.5,2\}\},
\end{equation*}
where the original function $f$ is defined in Definition
\ref{def:polygonalchain}, see Figure \ref{fig:ChangingGamma}. As can be seen, a
family of curves is generated for which the conventional and directional
mollifications are just special cases of the set ($\gamma = 0$ and $\gamma = 1$,
respectively). Moreover, note the results of Proposition
\ref{prop:CombinationEqualAtHalf} hold; all the curves are equal at the same
sets between knots, as their errors are equal to zero.
Additionally, Figure \ref{fig:ChangingGamma} illustrates how conventional and
directional mollification approaches behave in comparison to standard
spline-based methods within this representative example. While standard spline
schemes are highly optimized and widely effective for general-purpose curve
fitting, the directional mollification---and its extensions
via $\gamma$---offers a dedicated alternative for scenarios demanding formal and
analytical guarantees. Specifically, this approach naturally ensures $C^\infty$
smoothness, exact vertex interpolation if $\gamma = 1$, convex hull confinement
if $\gamma = 0$, analytical curvature guarantees for a polygonal chain, and low
computational complexity. Thus,  our method provides a flexible,
theoretically rigorous framework tailored for applications where at least one of
these specific geometric and analytical properties must be strictly guaranteed.

\begin{figure}[pos=t]
    \centering
    \includegraphics[width=\linewidth]{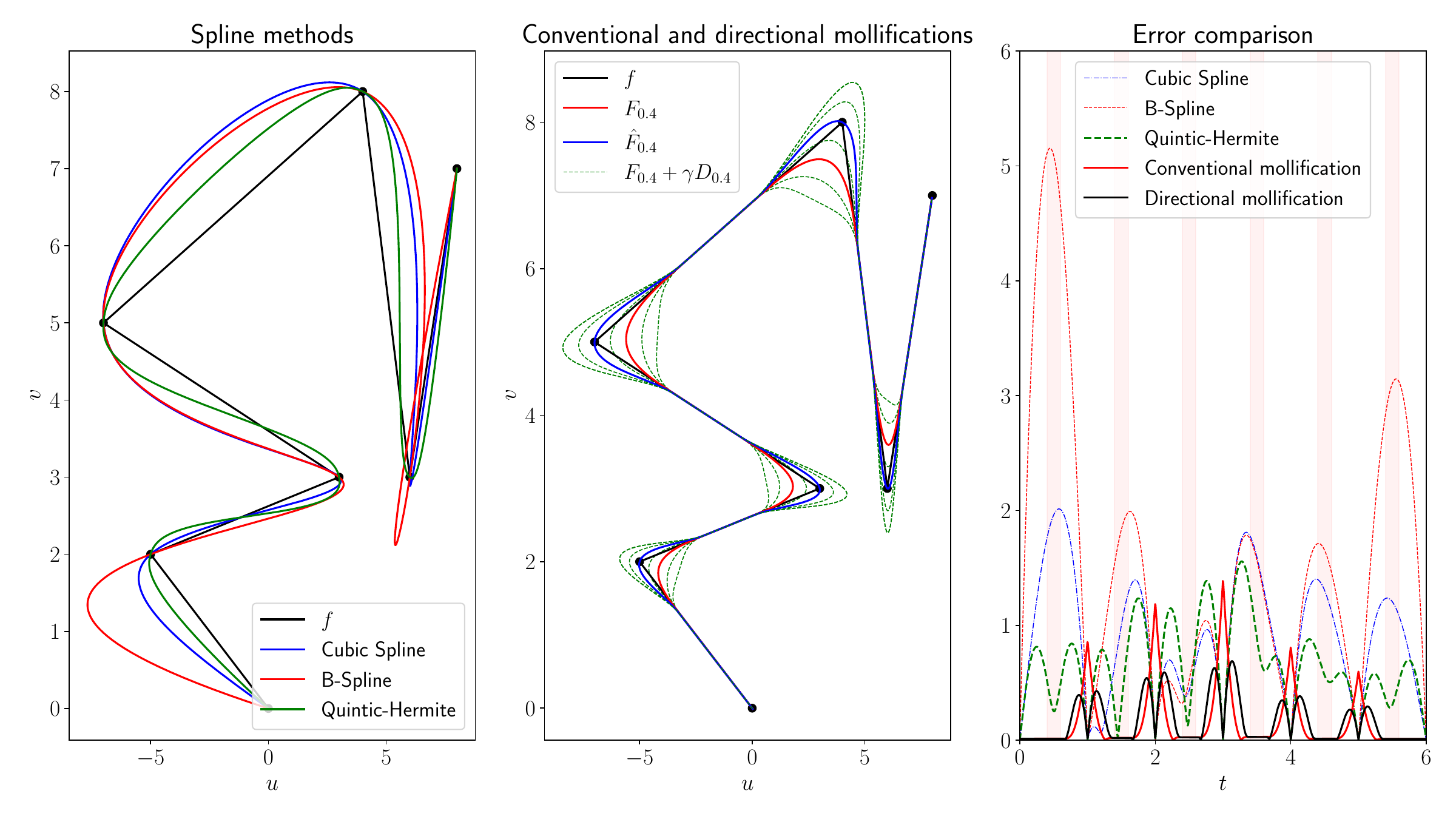}
    \caption{Illustrative comparison of a family of curves using
    \eqref{eq:Combination}, where the mollifier used is the one presented
    in Example \ref{example:OurMollifier} with $\ep = 0.4$ and different
    approaches of polynomial splines. The left picture represents
    in black a $p=6$ polygonal chain curve in $\R^2$ whose vertices are
    represented
    as black dots, and the components of $\R^2$ as $(u,v)$.
    It also shows in blue a cubic spline, in red a B-spline, and
    in green a Quintic Hermite polynomial spline. All of them are created such
    that they intersect the knots/vertices. The center picture also displays the original function as a black plot, while the red and blue curves show the conventional and directional mollifications of $f$, respectively.
    The dashed green lines represent $G_{\ep}^{\gamma}$ as in
    \eqref{eq:Combination}, for the values $\gamma \in
    \{-1,-0.5,0,0.5,1,1.5,2\}$. The right picture represents the Euclidean norms
    between the original function and the different approaches. That is, it
    represents the function $t \mapsto \|h(t)-f(t)\|_2$, where, $h$ is either, the
    cubic spline (blue dashed plot), B-spline (red dashed plot), quintic Hermite
    spline (green dashed plot), the conventional mollification (red solid plot)
    or the directional mollification (black plot) shown in the left and center
    pictures. The red solid regions represent the sets $V_r$ of Theorem
    \ref{thm:ConvergenceToLinesToSwitchPolygonalChain} for $r \in \{1,\dots,p\}$.}
    \label{fig:ChangingGamma}
\end{figure}
\section{Conclusions}\label{sec:Conclusions}

In this paper, we introduced a directional mollification operator that
transforms polygonal chains into $C^{\infty}$ curves while exactly interpolating
prescribed knots/vertices. By adding a convolution-weighted directional
derivative term to the classical smoothing, this method avoids the convex-hull
constraint of conventional mollifiers and preserves knots/vertices locations
without iterative solvers or knot insertion. The operator yields smooth
approximants that converge pointwise and uniformly on compact sets to the
original piecewise-linear curve as the smoothing parameter tends to zero, and
admits closed-form, easily computed curvature bounds. It features strong
locality: changing a single segment modifies the smoothed output only on that segment and within a controllably small neighborhood of its endpoints. Framing
the construction within a one-parameter family of operators unifies conventional
and directional mollification as special cases, providing a simple, efficient
method for curve design, trajectory planning, and CNC toolpath generation, where exact knot fidelity, smoothness, and explicit curvature control are required.

\section*{Use of AI tools declaration}
The authors declare they have not used Artificial Intelligence (AI) tools in the
creation of this article.

\section*{Acknowledgments}
The authors would like to thank the anonymous reviewers for their valuable
comments and constructive suggestions, which have significantly improved the
quality and presentation of this manuscript.

González-Calvin's work is especially supported by the FPU program of the Ministry of Science,
Innovation and Universities of Spain. This work is also supported by iRoboCity2030-CM,
Ref TEC-2024/TEC-62, financed by Comunidad Autónoma de Madrid (Spain) and by the
ERC Starting Grant iSwarm 101076091 and the RYC2020-030090-I grant from the
Spanish Ministry of Science.

\printcredits

\bibliographystyle{cas-model2-names}

\bibliography{cas-refs}

\end{document}